\def\eqref#1{equation~\ref{#1}}
\def\1{\bm{1}}
\def\ra{{\textnormal{a}}}
\def\mQ{{\bm{Q}}}
\def\mR{{\bm{R}}}
\DeclareMathAlphabet{\mathsfit}{\encodingdefault}{\sfdefault}{m}{sl}
\SetMathAlphabet{\mathsfit}{bold}{\encodingdefault}{\sfdefault}{bx}{n}
\def\gC{{\mathcal{C}}}
\def\gN{{\mathcal{N}}}
\def\gO{{\mathcal{O}}}
\def\gW{{\mathcal{W}}}
\def\gX{{\mathcal{X}}}
\def\sP{{\mathbb{P}}}
\newcommand{\E}{\mathbb{E}}
\newcommand{\R}{\mathbb{R}}
\DeclareMathOperator*{\argmin}{arg\,min}
\DeclareMathOperator{\sign}{sign}
\DeclareMathOperator{\tr}{tr}
\DeclareMathOperator{\adj}{adj}
\DeclareMathOperator{\vol}{vol}
\DeclareMathOperator{\erf}{erf}
\newcommand{\ddt}{\frac{\partial}{\partial \theta}}
\newcommand{\dJdt}{\frac{\partial H}{\partial \theta}}
\newtheorem{definition}{Definition}
\newtheorem{proposition}{Proposition}
\pgfplotsset{compat=newest}
\newcommand*{\eg}{{\it e.g.}\@\xspace}
\newcommand*{\ie}{{\it i.e.}\@\xspace}
\renewcommand{\ra}[1]{\renewcommand{\arraystretch}{#1}}
\newcommand{\na}{\textcolor{gray}{\footnotesize N/A}}
\title{Convex Potential Flows: \\Universal Probability Distributions with \\Optimal Transport and Convex Optimization}
\author{Chin-Wei Huang \\
University of Montreal \& Mila \\
\texttt{chin-wei.huang@umontreal.ca}
\And
Ricky T. Q. Chen \\
University of Toronto \& Vector Institute \hspace{0.67cm} \\
\texttt{rtqichen@cs.toronto.edu}
\And
Christos Tsirigotis \\
University of Montreal \& Mila \\
\texttt{christos.tsirigotis@umontreal.ca}
\And
Aaron Courville \\
University of Montreal, Mila \& CIFAR Fellow \\
\texttt{aaron.courville@umontreal.ca}
}
\begin{document}

\maketitle

\begin{abstract}
Flow-based models are powerful tools for designing probabilistic models with tractable density. 
This paper introduces Convex Potential Flows (CP-Flow), a natural and efficient parameterization of invertible models inspired by the optimal transport (OT) theory.
CP-Flows are the gradient map of a strongly convex neural potential function. 
The convexity implies invertibility and allows us to resort to convex optimization to solve the convex conjugate for efficient inversion. 
To enable maximum likelihood training, we derive a new gradient estimator of the log-determinant of the Jacobian, which involves solving an inverse-Hessian vector product using the conjugate gradient method. 
The gradient estimator has \emph{constant-memory} cost, and can be made effectively \emph{unbiased} by reducing the error tolerance level of the convex optimization routine. 
Theoretically, we prove that CP-Flows are \emph{universal} density approximators and are \emph{optimal} in the OT sense. 
Our empirical results show that CP-Flow performs competitively on standard benchmarks of density estimation and variational inference.
\end{abstract}

\section{Introduction}

Normalizing flows \citep{dinh2014nice, rezende2015variational} have recently gathered much interest within the machine learning community, ever since its recent breakthrough in modelling high dimensional image data \citep{dinh2017density, kingma2018glow}. 
They are characterized by an invertible mapping that can reshape the distribution of its input data into a simpler or more complex one. 
To enable efficient training, numerous tricks have been proposed to impose structural constraints on its parameterization, such that the density of the model can be tractably computed. 

We ask the following question: ``what is the natural way to parameterize a normalizing flow?''
To gain a bit more intuition, we start from the one-dimension case. 
If a function $f:\R\rightarrow\R$ is continuous, it is invertible (injective onto its image) if and only if it is strictly monotonic. 
This means that if we are only allowed to move the probability mass continuously without flipping the order of the particles, then we can only rearrange them by changing the distance in between.

In this work, we seek to generalize the above intuition of monotone rearrangement in 1D.
We do so by motivating the parameterization of normalizing flows from an optimal transport perspective, which allows us to define some notion of rearrangement cost \citep{villani2008optimal}. 
It turns out, if we want the output of a flow to follow some desired distribution, under mild regularity conditions, we can characterize the unique optimal mapping by a convex potential \citep{brenier1991polar}. 
In light of this, we propose to parameterize normalizing flows by the gradient map of a (strongly) convex potential.
Owing to this theoretical insight, the proposed method is provably \emph{universal} and \emph{optimal}; this means the proposed flow family can approximate arbitrary distributions and requires the least amount of transport cost.
Furthermore, the parameterization with convex potentials allows us to formulate model inversion and gradient estimation as convex optimization problems. 
As such, we make use of existing tools from the convex optimization literature to cheaply and efficiently estimate all quantities of interest.

In terms of the benefits of parameterizing a flow as a gradient field, the convex potential is an $\R^d\rightarrow\R$ function, which is different from most existing discrete-time flows which are $\R^d\rightarrow\R^d$. 
This makes CP-Flow relatively compact. 
It is also arguably easier to design a convex architecture, as we do not need to satisfy constraints such as orthogonality or Lipschitzness; the latter two usually require a direct or an iterative reparameterization of the parameters.
Finally, it is possible to incorporate additional structure such as equivariance \citep{cohen2016group, zaheer2017deep} into the flow's parameterization, making CP-Flow a more flexible general purpose density model. 




\section{Background: Normalizing Flows and Optimal Transport}

Normalizing flows are characterized by a differentiable, invertible neural network $f$ such that the probability density of the network's output can be computed conveniently using the change-of-variable formula
\begin{align}
p_Y(f(x)) = p_X(x)\left|\frac{\partial f(x)}{\partial x}\right|^{-1}  
\qquad\Longleftrightarrow\qquad 
p_Y(y) = p_X(f^{-1}(y))\left|\frac{\partial f^{-1}(y)}{\partial y}\right| 
\label{eq:cov}
\end{align}
where the Jacobian determinant term captures the local expansion or contraction of the density near $x$ (resp. $y$) induced by the mapping $f$ (resp. $f^{-1}$), and $p_X$ is the density of a random variable $X$. 
The invertibility requirement has 
led to the design of many
special neural network parameterizations such as triangular maps, ordinary differential equations, orthogonality or Lipschitz constraints.


\paragraph{Universal Flows}
\label{sec:background}
For a general learning framework to be meaningful, a model needs to be flexible enough to capture variations in the data distribution.
In the context of density modeling, this corresponds to the model's capability to represent arbitrary probability distributions of interest. 
Even though there exists a long history of literature on universal approximation capability of deep neural networks \citep{cybenko1989approximation,lu2017expressive,lin2018resnet}, 
invertible neural networks generally have limited expressivity and cannot 
approximate
arbitrary functions.
However, for the purpose of approximating a probability distribution,
it suffices to show 
that
the distribution induced by a normalizing flow is universal. 

Among many ways to establish distributional universality of flow based methods (\eg{} \citealt{huang2018neural, huang2020solving, teshima2020coupling, kong2020expressive}), one particular approach is to approximate a \emph{deterministic coupling} between probability measures. 
Given a pair of probability densities
$p_X$ and $p_Y$, a deterministic coupling is a mapping $g$ such that $g(X)\sim p_Y$ if $X\sim p_X$.
We seek to find a coupling that is invertible, or at least can be approximated by invertible mappings. 

\paragraph{Optimal Transport} Let $c(x,y)$ be a cost function. 
The \emph{Monge problem} \citep{villani2008optimal}
pertains to finding the optimal transport map $g$ that realizes the minimal expected cost 
\begin{align}
\label{eq:monge}
J_c(p_X, p_Y) = \inf_{\widetilde{g}:\widetilde{g}(X)\sim p_Y} \E_{X\sim p_X}[ c(X, \widetilde{g}(X))]
\end{align}
When the second moments of $X$ and $Y$ are both finite, and $X$ is regular enough (e.g. having a density), then the special case of $c(x,y)=||x-y||^2$ has an interesting solution, a celebrated theorem due to \citet{brenier1987decomposition, brenier1991polar}:
\begin{restatable}[\textbf{Brenier's Theorem}, Theorem 1.22 of \citet{santambrogio2015optimal}]{thm}{diffcvx}
Let $\mu,\nu$ be probability measures with a finite second moment, and assume $\mu$ has a Lebesgue density $p_X$. 
Then there exists a convex potential $G$ such that the gradient map $g:=\nabla G$ (defined up to a null set) uniquely solves the Monge problem in \cref{eq:monge} with the quadratic cost function $c(x,y) = ||x-y||^2$. 
\end{restatable}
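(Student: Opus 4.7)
The plan is to follow the classical Kantorovich duality route to Brenier's theorem. First, I would relax the Monge problem to the Kantorovich problem: instead of searching over transport maps, search over couplings $\pi\in\Pi(\mu,\nu)$ minimizing $\int \|x-y\|^2\,d\pi(x,y)$. Because $\Pi(\mu,\nu)$ is weakly compact (tightness follows from tightness of the marginals) and the quadratic cost is lower semi-continuous, a minimizer $\pi^*$ exists.

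Second, I would reduce quadratic-cost optimal transport to a linear problem. Using $\|x-y\|^2 = \|x\|^2 + \|y\|^2 - 2\langle x,y\rangle$ and the fact that the marginals of any $\pi\in\Pi(\mu,\nu)$ are fixed, the first two terms are constants; minimizing the quadratic cost is thus equivalent to maximizing $\int \langle x,y\rangle\,d\pi$. The Kantorovich duality then yields a pair of potentials $(\phi,\psi)$ with $\phi(x)+\psi(y)\ge \langle x,y\rangle$ such that the optimal $\pi^*$ concentrates on the equality set $\{(x,y): \phi(x)+\psi(y)=\langle x,y\rangle\}$. By the standard $c$-concavity reduction, I can replace $\phi$ by its double Legendre transform so that $\phi$ is convex, lower semi-continuous, proper, and $\psi=\phi^{*}$ is its convex conjugate.

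Third, I would convert the support condition into a gradient relation. On $\mathrm{supp}(\pi^*)$, the identity $\phi(x)+\phi^{*}(y)=\langle x,y\rangle$ is precisely the equality case of the Fenchel--Young inequality, which is equivalent to $y\in \partial\phi(x)$. Since $\mu$ has a Lebesgue density and convex functions are differentiable Lebesgue-a.e.\ by Rademacher's theorem (or Alexandrov's), $\partial\phi(x)=\{\nabla\phi(x)\}$ for $\mu$-a.e.\ $x$. Thus $\pi^*$ is concentrated on the graph of $\nabla\phi$, so $\pi^* = (\mathrm{id},\nabla\phi)_\#\mu$, which shows that the map $g:=\nabla G$ with $G:=\phi$ pushes $\mu$ to $\nu$ and attains the Monge infimum.

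Finally, for uniqueness, I would note that if $\nabla G_1$ and $\nabla G_2$ were two convex-gradient optimal maps, both $(\mathrm{id},\nabla G_i)_\#\mu$ would be optimal couplings. The set of optimal couplings is convex, so their average is also optimal; by the argument above it too must concentrate on a single graph, forcing $\nabla G_1=\nabla G_2$ $\mu$-a.e. The main obstacle in this plan is Step three, the passage from abstract duality to a pointwise gradient identification: getting there cleanly requires the $c$-concavity reduction to guarantee convexity of $\phi$ and the a.e.\ differentiability theorem for convex functions, together with the fact that $\mu\ll\mathrm{Leb}$ so that the Lebesgue-null set of non-differentiability is also $\mu$-null. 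The remaining steps are standard compactness and duality arguments.
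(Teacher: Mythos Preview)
Your outline is the standard, correct route to Brenier's theorem: relax Monge to Kantorovich, use duality for the cost $-\langle x,y\rangle$, reduce to a convex potential via $c$-concavity, invoke the equality case of Fenchel--Young to get $y\in\partial\phi(x)$ on the support of the optimal plan, and then use a.e.\ differentiability of convex functions together with $\mu\ll\mathrm{Leb}$ to collapse the subdifferential to a single gradient. The uniqueness argument via convexity of the optimal set is also the usual one. There is no genuine gap.

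However, there is nothing to compare against: the paper does \emph{not} supply its own proof of this statement. Brenier's theorem is quoted as background (with an explicit citation to Theorem~1.22 of Santambrogio) and is used as a black box in the proofs of the paper's own results (universality and optimality of CP-Flows). The restatable name \texttt{diffcvx} is reused in the paper for a different theorem---the one asserting that pointwise convergence of convex potentials implies a.e.\ convergence of their gradients---and the proof appearing after \texttt{\textbackslash diffcvx*} in the appendix is for that statement, not for Brenier's theorem. So your proposal is fine as a self-contained sketch, but it goes beyond what the paper itself attempts.
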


Some recent works are also inspired by Brenier's theorem and utilize a convex potential to parameterize a critic model, starting from \citet{taghvaei20192}, and further built upon by \citet{makkuva2019optimal} who parameterize a generator with a convex potential and concurrently by \citet{korotin2019wasserstein}.
Our work sets itself apart from these prior works in that it is entirely likelihood-based, minimizing the (empirical) KL divergence as opposed to an approximate optimal transport cost. 

\section{Convex Potential Flows}
\label{sec:cpflow}

\begin{figure}
    \centering
    \begin{subfigure}[b]{0.24\linewidth}
        \includegraphics[width=\linewidth]{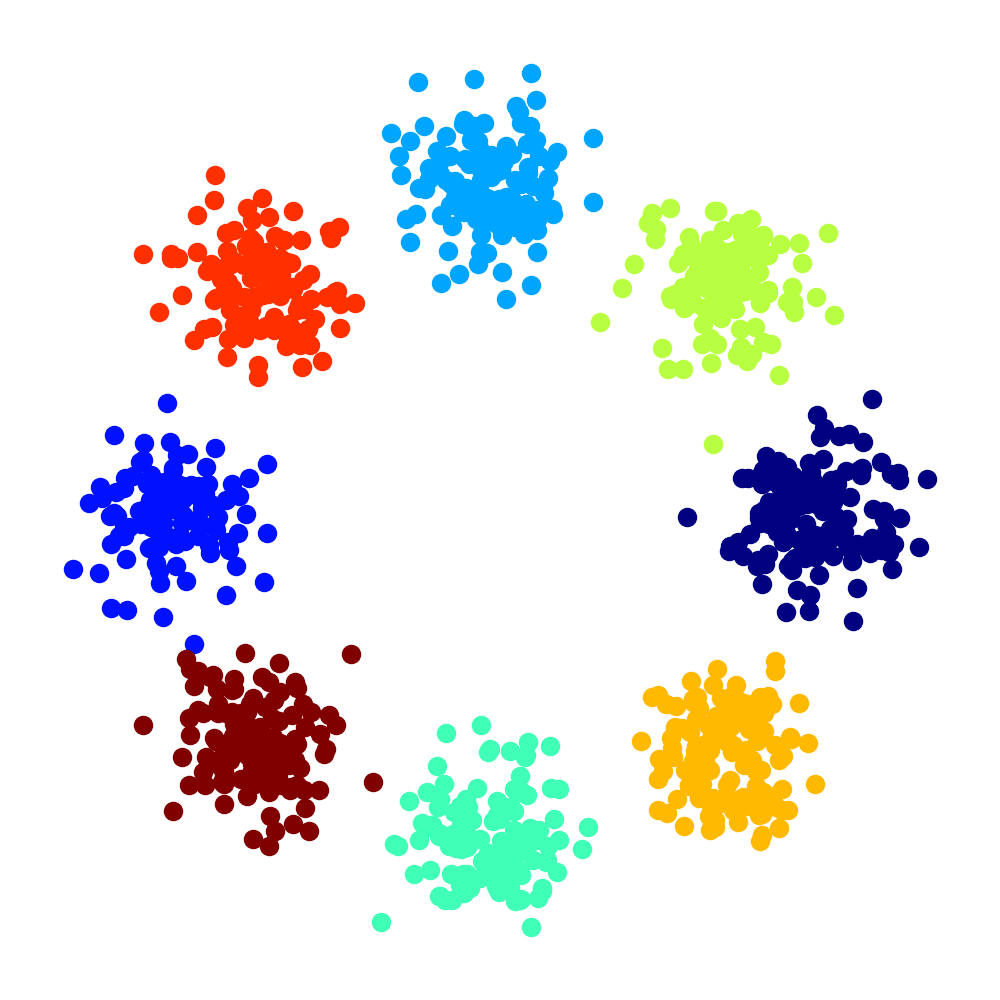}
        \caption{}
    \end{subfigure}%
    \begin{subfigure}[b]{0.24\linewidth}
        \includegraphics[width=\linewidth]{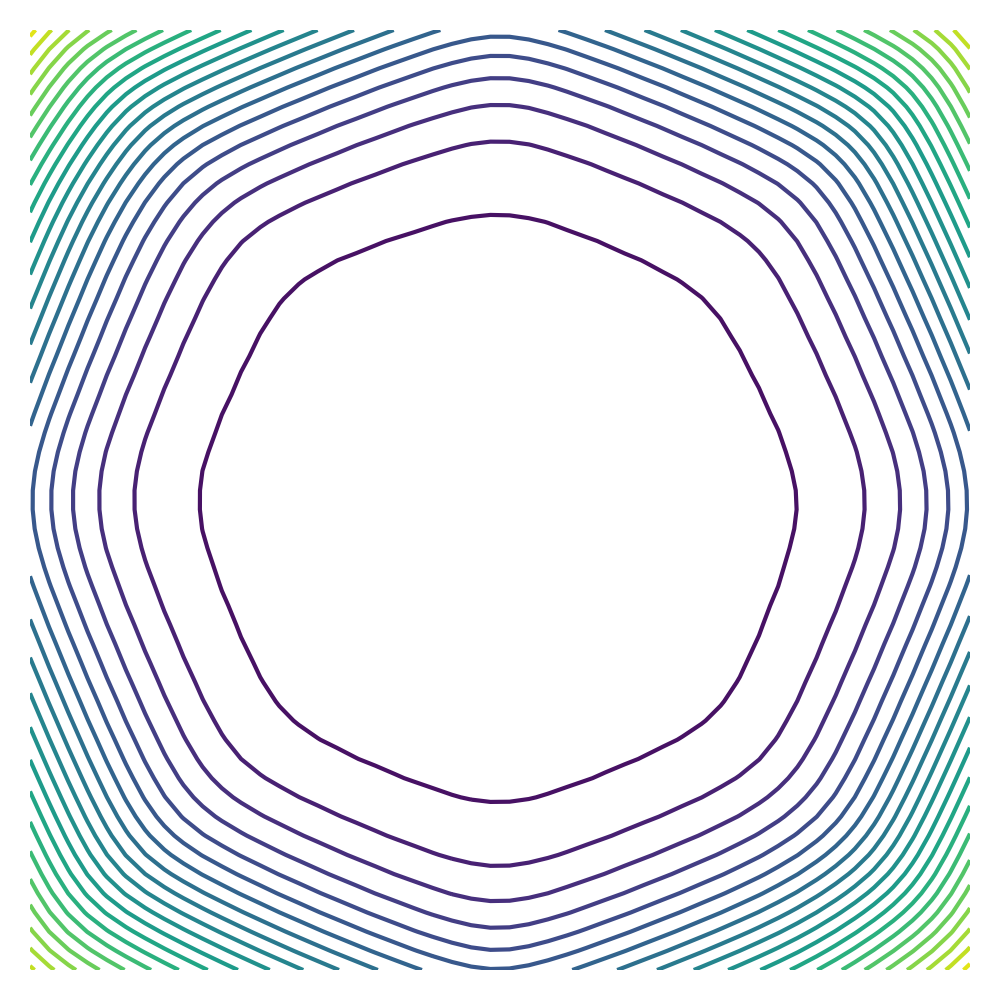}
        \caption{}
    \end{subfigure}%
    \begin{subfigure}[b]{0.24\linewidth}
        \includegraphics[width=\linewidth]{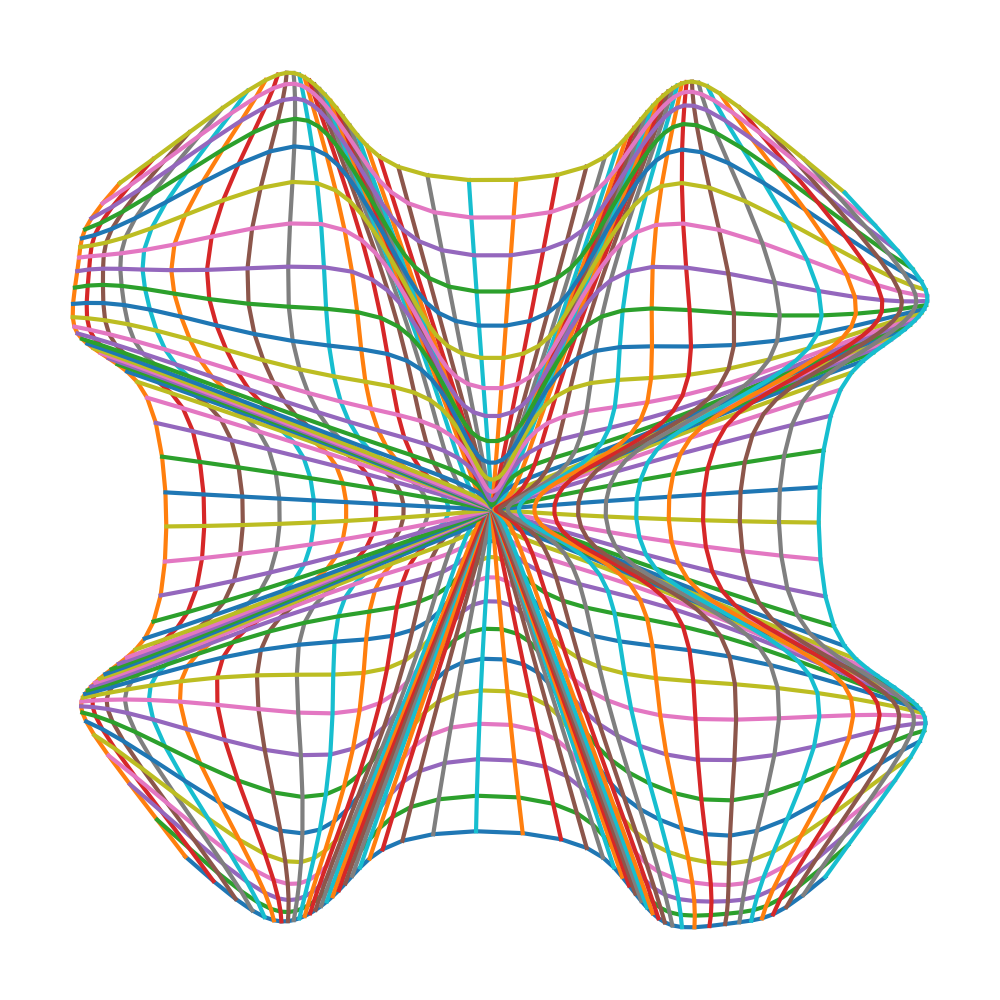}
        \caption{}
    \end{subfigure}%
    \begin{subfigure}[b]{0.24\linewidth}
        \includegraphics[width=\linewidth]{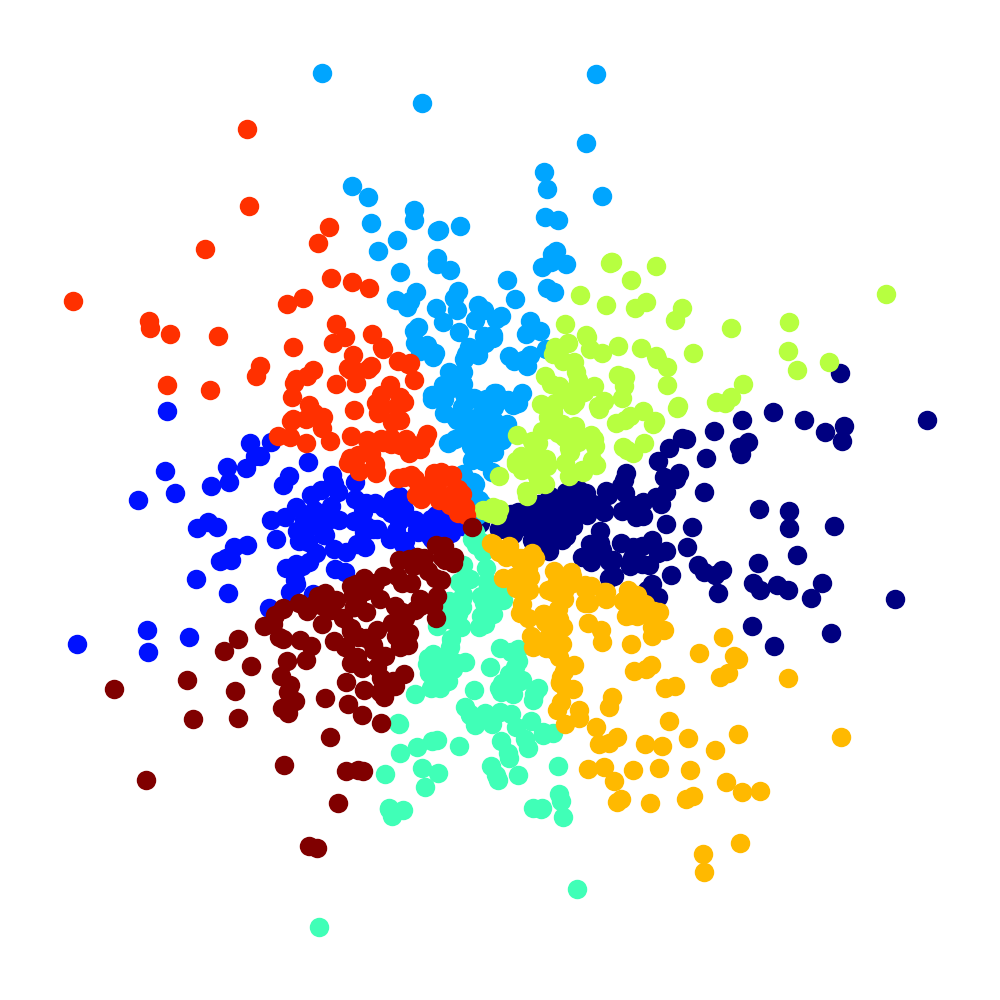}
        \caption{}
    \end{subfigure}
    \caption{\small Illustration of Convex Potential Flow. 
    (a) Data $x$ drawn from a mixture of Gaussians. 
    (b) Learned convex potential $F$. 
    (c) Mesh grid distorted by the gradient map of the convex potential $f=\nabla F$.
    (d) Encoding of the data via the gradient map $z=f(x)$.
    Notably, the encoding is the \emph{value of the gradient} of the convex potential.
    When the curvature of the potential function is locally flat, gradient values are small and this results in a contraction towards the origin.}
    \label{fig:8gaussian}
\end{figure}

Given a strictly convex potential $F$, we can define an injective map (invertible from its image) via its gradient $f=\nabla F$, since the Jacobian of $f$ is the Hessian matrix of $F$, and is thus positive definite.
In this section, we discuss the parameterization of the convex potential $F$ (\ref{sec:modeling}), 
and then address gradient estimation for CP-Flows (\ref{sec:grad_estimator}). 
We examine the connection to other parameterization of normalizing flows (\ref{sec:connection}), and finally rigorously prove universality in the next section. 


\subsection{Modeling}
\label{sec:modeling}

\paragraph{Input Convex Neural Networks} We use $L(x)$ to denote a linear layer,
and $L^+(x)$ to denote a linear layer with positive weights. 
We use the (fully) input-convex neural network (ICNN, \citet{amos2017input}) to parameterize the convex potential, which has the following form
$$F(x) = L_{K+1}^+(s(z_K)) + L_{K+1}(x) \qquad  z_k:= L_{k}^+ (s(z_{k-1})) + L_k(x) \qquad z_1 := L_1(x)$$
where $s$ is a non-decreasing, convex activation function. 
In this work, we use softplus-type activation functions, which is a rich family of activation functions that can be shown to uniformly approximate the ReLU activation. 
See Appendix~\ref{app:softplus} for details.

\begin{wrapfigure}[10]{r}{0.45\textwidth}
\vspace{-2.0em}
\begin{minipage}{0.45\textwidth}
\begin{algorithm}[H]
\caption{Inverting CP-Flow.}
\label{alg:invert}
\begin{algorithmic}[1]
\Procedure{Invert}{$F, y, \texttt{CvxSolver}$}       
    \State Initialize $x\leftarrow y$
    \State \textbf{def} \texttt{closure}():
        \State \hskip1.5em Compute loss: $l \leftarrow F(x) - y^\top x$
        \State \hskip1.5em \textbf{return} $l$
    \State $x\leftarrow\texttt{CvxSolver}(\texttt{closure}, x)$
    \State \textbf{return} $x$
\EndProcedure
\end{algorithmic}
\end{algorithm}
\end{minipage}
\end{wrapfigure}
\paragraph{Invertibility and Inversion Procedure}
If the activation $s$ is twice differentiable, then the Hessian $H_F$ is positive semi-definite.
We can make it strongly convex by adding a quadratic term
$F_\alpha(x) = \frac{\alpha}{2} ||x||^2_2 + F(x)$, such that $H_{F_\alpha}\succeq\alpha I\succ0$.
This means the gradient map $f_\alpha=\nabla F_\alpha$ is injective onto its image.
Furthermore, it is surjective since for any $y\in\R^d$, the potential $x\mapsto F_\alpha(x) - y^\top x$ has a unique minimizer\footnote{The minimizer $x^*$ corresponds to the gradient map of the \emph{convex conjugate} of the potential. 
See Appendix \ref{app:invertibility} for a formal discussion.} satisfying the first order condition $\nabla F_\alpha(x) = y$, due to the strong convexity 
and differentiability.
We refer to this invertible mapping $f_\alpha$ as the \emph{convex potential flow}, or the CP-Flow. 
The above discussion also implies we can plug in a black-box convex solver to invert the gradient map $f_\alpha$, which we summarize in Algorithm \ref{alg:invert}. 
Inverting a batch of independent inputs is as simple as summing the convex potential over all inputs: since all of the entries of the scalar $l$ in the minibatch are independent of each other, computing the gradient all $l$'s wrt all $x$'s amounts to computing the gradient of the summation of $l$'s wrt all $x$'s. 
Due to the convex nature of the problem, a wide selection of algorithms can be used with convergence guarantees \citep{nesterov1998introductory}. 
In practice, we use the \emph{L-BFGS} algorithm \citep{byrd1995limited} as our \texttt{CvxSolver}.

\newpage 

\paragraph{Estimating Log Probability} Following equation (\ref{eq:cov}), computing the log density for CP-Flows requires taking 
the log determinant of a symmetric positive definite Jacobian matrix (as it is the Hessian of the potential). 
There exists numerous works on estimating spectral densities~(\eg{} \citealp{tal1984accurate,silver1994densities,han2018stochastic,adams2018estimating}), of which this quantity is a special case. See \citet{lin2016approximating} for an overview of methods that only require access to Hessian-vector products.
Hessian-vector products (hvp) are cheap to compute with reverse-mode automatic differentiation~\citep{baydin2017automatic}, which does not require constructing the full Hessian matrix and has the same asymptotic cost as evaluating $F_\alpha$.

In particular, the log determinant can be rewritten in the form of a generalized trace $\tr \log H$. \citet{chen2019residual} limit the spectral norm (\ie eigenvalues) of $H$ and directly use the Taylor expansion of the matrix logarithm. Since our $H$ has unbounded eigenvalues, we use a more complex algorithm designed for symmetric matrices, the \emph{stochastic Lanczos quadrature} (SLQ; \citealp{ubaru2017fast}). At the core of SLQ is the Lanczos method, which computes $m$ eigenvalues of $H$ by first constructing a symmetric tridiagonal matrix $T \in \R^{m \times m}$ and computing the eigenvalues of $T$. The Lanczos procedure only requires Hessian-vector products, and it can be combined with a stochastic trace estimator to provide a stochastic estimate of our log probability. We chose SLQ because it has shown theoretically and empirically to have low variance~\citep{ubaru2017fast}. 

\subsection{\texorpdfstring{$\gO(1)$}--Memory Unbiased \texorpdfstring{$\nabla\log\det H$}{logdet gradient} estimator}
\label{sec:grad_estimator}

\begin{wrapfigure}[10]{r}{0.4\textwidth}
\vspace{-2em}
\includegraphics[width=\linewidth]{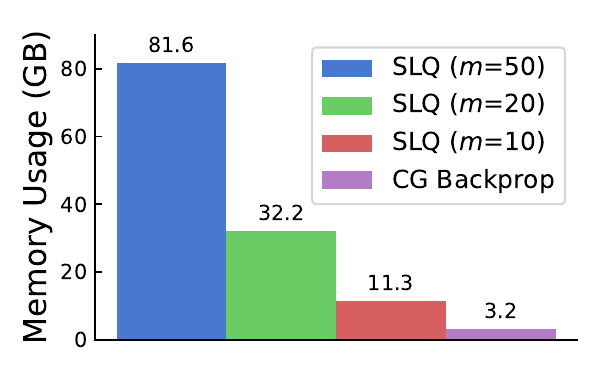}
\vspace{-2em}
\caption{\small Memory for training CIFAR-10.}
\label{fig:memory_profile}
\end{wrapfigure}
We would also like to have an estimator for the \emph{gradient} of the log determinant to enable variants of stochastic gradient descent for optimization.
Unfortunately, directly backpropagating through the log determinant estimator is not ideal.
Two major drawbacks of directly differentiating through SLQ are that it requires (i) differentiating through an eigendecomposition routine and (ii) storing all Hessian-vector products in memory~(see \cref{fig:memory_profile}).
Problem (i) is more specific to SLQ, 
because
the gradient of an eigendecomposition is not defined when the eigenvalues are not unique~\citep{seeger2017auto}. Consequently, we have empirically observed that differentiating through SLQ can be unstable, frequently resulting in NaNs due to the eigendecomposition.
Problem (ii) will hold true for other algorithms that also estimate $\log \det H$ with Hessian-vector products, and generally the only difference is that a different numerical routine would need to be differentiated through. 
Due to these problems, we do not differentiate through SLQ, but we still use it as an efficient method for monitoring training progress.

Instead, it is possible to construct an alternative formulation of the gradient as the solution of a convex optimization problem, foregoing the necessity of differentiating through an estimation routine of the log determinant.
We adapt the gradient formula from~\citet[Appendix~C]{chen2019residual} to the context of convex potentials.
Using Jacobi's formula$^*$ and the adjugate representation of the matrix inverse$^\dagger$, for any invertible matrix $H$ with parameter $\theta$, we have the following identity:
\begin{align}
\resizebox{0.99\textwidth}{!}{%
$\ddt \log\det H 
= \frac{1}{\det H} \ddt\det H 
\overset{*}= \frac{1}{\textcolor{red}{\det H}} \tr\left(\textcolor{red}{\adj(H)} \dJdt\right) 
\overset{\dagger}= \tr\left( \textcolor{red}{H^{-1}} \dJdt\right) 
= \E_v\left[v^\top H^{-1} \dJdt v\right].
$}
\label{eq:ldj_grad_estimator}
\end{align}
Notably, in the last equality, we used the Hutchinson trace estimator \citep{hutchinson1989stochastic} with a Rademacher random vector $v$, leading to a $\gO(1)$-memory, unbiased Monte Carlo gradient estimator.

Computing the quantity $v^\top H^{-1}$ in \cref{eq:ldj_grad_estimator} by constructing and inverting the full Hessian requires $d$ calls to an automatic differentiation routine and is too costly for our purposes.
However, 
we can recast this quantity as the solution of a quadratic optimization problem 
\begin{equation}\label{eq:quad_opt}
\arg\min_z \left\{ \frac{1}{2} z^\top H z - v^\top z\right\}
\end{equation}
which has the unique minimizer $z^* = H^{-1} v$ since $H$ is symmetric positive definite.

\newpage

\begin{wrapfigure}[10]{r}{0.48\textwidth}
\vspace{-1.25em}
\begin{minipage}{0.48\textwidth}
\begin{algorithm}[H]
\caption{Surrogate training objective.} 
\label{alg:logdet_grad}
\begin{algorithmic}[1]
\Procedure{SurrogateObj}{$F, x, \texttt{CG}$}       
    \State Obtain the gradient $\smash{f(x) \triangleq \nabla_x F(x)}$
    \State Sample Rademacher random vector $r$
    \State \textbf{def} \texttt{hvp}($v$):
        \State \hskip1.5em \textbf{return} $v^\top \frac{\partial}{\partial x} f(x)$
    \State $z \leftarrow \texttt{stop\_gradient}\left(\texttt{CG}(\texttt{hvp}, r)\right)$
    \State \textbf{return} $\texttt{hvp}(z)^\top r$
\EndProcedure
\end{algorithmic}
\end{algorithm}
\end{minipage}
\end{wrapfigure}
We use the \emph{conjugate gradient} (CG) method, which is specifically designed for solving the unconstrained optimization problems in \cref{eq:quad_opt} with symmetric positive definite $H$.
It uses only Hessian-vector products and is straightforward to parallelize.
%
Conjugate gradient is guaranteed to return the exact solution $z^*$ within $d$ iterations, and the error of the approximation is known to converge exponentially fast $||z^m - z^*||_{H} \leq 2 \gamma^{m} ||z^0 - z^*||_{H} $, where $z^m$ is the estimate after $m$ iterations. The rate of convergence $\gamma<1$ relates to the condition number of $H$. 
For more details, see \citet[Ch.~5]{nocedal2006numerical}. In practice, we terminate CG when $||Hz^m - v||_\infty < \tau$ is satisfied for some user-controlled tolerance.
Empirically, we find that stringent tolerance values are unnecessary for stochastic optimization~(see \cref{app:ablation}).

Estimating the full quantity in \cref{eq:ldj_grad_estimator} is then simply a matter of computing and differentiating a scalar quantity (a surrogate objective) involving another Hessian-vector product: $\frac{d}{d\theta} \left((z^m)^\top H v\right)$, where only $H$ is differentiated through (since $z^m$ is only used to approximate $v^\top H^{-1}$ as a modifier of the gradient).
We summarize this procedure in Algorithm~\ref{alg:logdet_grad}. 
Similar to inversion, the \texttt{hvp} can also be computed in batch by summing over the data index, since all entries are independent.

\subsection{Connection to other normalizing flows}
\label{sec:connection}
\paragraph{Residual Flow} For $\alpha=1$, the gradient map $f_1$ resembles the residual flow \citep{behrmann2019invertible,chen2019residual}.
They require the residual block---equivalent to our gradient map $f$---to be \emph{contractive} (with Lipschitz constant strictly smaller than 1) as a sufficient condition for invertibility. In contrast, we enforce invertibility by using strongly convex potentials, which guarantees that the inverse of our flow is globally unique. With this, we do not pay the extra compute cost for having to satisfy Lipschitz constraints using methods such as spectral normalization~\citep{miyato2018spectral}.
Our gradient estimator is also derived similarly to that of~\citet{chen2019residual}, though we have the benefit of using well-studied convex optimization algorithms for computing the gradients.

\paragraph{Sylvester Flow}
By restricting the architecture of our ICNN to one hidden layer, we can also recover a form similar to Sylvester Flows. For a 1-hidden layer ICNN ($K=1$) and $\alpha=1$, we have $F_{1} = \frac{1}{2}||x||^2_2 + L_2^+ (s(L_1 x)) + L_2 (x)$.
Setting the weights of $L_2$ to zero, we have
\begin{equation}
f_1(x) = \nabla_x F_1(x) = x +  W_1^\top \text{diag}({w_2^+})  s'(W_1 x + b_1).
\end{equation}
We notice the above form bears a close resemblance to the Sylvester normalizing flow \citep{van2018sylvester} (with $\mQ$, $\mR$ and $\widetilde{\mR}$ from \citet{van2018sylvester} being equal to $W_1^\top$, $\text{diag}(w_2^+)$ and $I$, respectively).
For the Sylvester flow to be invertible, they require that $\mR$ and $\widetilde{\mR}$ be triangular and $\mQ$ be orthogonal, which is a computationally costly procedure. This orthogonality constraint also implies that the number of hidden units cannot exceed $d$. 
This restriction to orthogonal matrices and one hidden layer are for applying Sylvester's determinant identity. 
In contrast, we do not require our weight matrices to be orthogonal, and we can use any hidden width and depth for the ICNN.

\paragraph{Sigmoidal Flow}
Let $s$ be the softplus activation function and $\sigma=s'$.
Then for the 1-dimensional case ($d=1$) and $\alpha=0$ (without the residual connection), we have
\begin{equation}
    \frac{\partial}{\partial x}F_0(x) = \sum_{j=1} w_{1, j} w_{2, j}^+ \sigma( w_{1, j} x + b_{1, j} )
= \sum_{j=1} |w_{1, j}| w_{2, j}^+ \sigma( |w_{1, j}| x + \sign(w_{1, j}) b_{1, j} ) + \text{const.}
\end{equation}
which is equivalent to the sigmoidal flow of~\citet{huang2018neural} up to rescaling (since the weighted sum is no longer a convex sum) and a constant shift, and is monotone due to the positive weights.
This correspondence is not surprising since a differentiable function is convex if and only if its derivative is monotonically non-decreasing.
It also means we can parameterize an increasing function as the derivative of a convex function, which opens up a new direction for parameterizing autoregressive normalizing flows~\citep{kingma2016improved, huang2018neural, muller2019neural, jaini2019sum, durkan2019neural, wehenkel2019unconstrained}.

\paragraph{Flows with Potential Parameterization}
Inspired by connections between optimal transport and continuous normalizing flows, some works~\citep{zhang2018monge, finlay2020learning,onken2020ot} have proposed to parameterize continuous-time transformations by taking the gradient of a scalar potential.
They do not strictly require the potential to be convex since it is guaranteed to be invertible in the infinitesimal setting of continuous normalizing flows \citep{chen2018neural}. 
There exist works~\citep{yang2019potential,finlay2020train,onken2020ot} that have applied the theory of optimal transport to regularize continuous-time flows to have low transport cost.
In contrast, we connect optimal transport with discrete-time normalizing flows, and CP-Flow is guaranteed by construction to converge pointwise to the optimal mapping between distributions without explicit regularization (see Section~\ref{sec:theory}). 



\section{Theoretical Analyses}
\label{sec:theory}
As explained in Section \ref{sec:background}, the parameterization of CP-Flow is inspired by the Brenier potential. 
So naturally we would hope to show that (1) CP-Flows are distributionally universal, and that (2) the learned invertible map is optimal in the sense of the average squared distance the input travels $\E[||x-f(x)||^2]$.
Proofs of statements made in this section can be found in Appendices \ref{app:univ} and \ref{app:optim}.

To show (1), our first step is to show that ICNNs can approximate arbitrary convex functions.
However, convergence of potential functions does not generally imply convergence of the gradient fields. 
A classic example is the sequence $F_n=\sin(nx)/\sqrt{n}$ and the corresponding derivatives $f_n=\cos(nx)\sqrt{n}$: $F_n\rightarrow 0$ as $n\rightarrow\infty$ but $f_n$ does not. 
Fortunately, convexity allows us to control the variation of the gradient map (since the derivative of a convex function is monotone), so our second step of approximation holds.

\vspace{0.2cm}
\begin{restatable}{thm}{diffcvx}
\label{thm:diff_convex_converge}
Let $F_n:\R^d\rightarrow\R$ be differentiable convex functions
and $G:\R^d\rightarrow\R$ be a proper convex function.
Assume $F_n\rightarrow G$.
Then for almost every $x\in\R^d$,
$G$ is differentiable and $f_n(x):=\nabla F_n(x) \rightarrow \nabla G(x) =: g(x)$. 
\end{restatable}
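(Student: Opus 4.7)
The plan is to combine three classical facts from convex analysis: (i) on the interior of its effective domain, a proper convex function is locally Lipschitz and hence differentiable almost everywhere; (ii) pointwise convergence of finite convex functions on an open convex set is automatically locally uniform; and (iii) at a point where the limit is differentiable, the subgradient inequality passes to the limit and pins down the gradient.

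First, I would restrict attention to the interior $\Omega$ of the effective domain of $G$. On $\Omega$ the function $G$ is finite, continuous, and locally Lipschitz, so by the classical a.e.\ differentiability of convex functions, $G$ is differentiable at almost every $x \in \Omega$. Since the theorem's conclusion is ``for almost every $x$'', it suffices to establish $\nabla F_n(x) \to \nabla G(x)$ at each such differentiability point; in the intended CP-Flow application we moreover have $\Omega = \R^d$ since ICNNs are everywhere finite.

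Second, I would upgrade the hypothesis $F_n \to G$ from pointwise to locally uniform on $\Omega$. This is a standard theorem (e.g.\ Rockafellar, \emph{Convex Analysis}, Thm.~10.8): a sequence of finite convex functions on an open convex set that converges pointwise to a finite convex limit converges uniformly on compact subsets. A direct consequence is that the $F_n$ are uniformly Lipschitz on every compact $K \subset \Omega$, so for each $x \in \Omega$ the gradient sequence $\{\nabla F_n(x)\}$ is bounded in norm by the common Lipschitz constant of the $F_n$ on a neighborhood of $x$.

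Third, fix $x \in \Omega$ at which $G$ is differentiable, and let $v \in \R^d$ be any cluster point of the bounded sequence $\{\nabla F_n(x)\}$. Along a subsequence with $\nabla F_{n_k}(x) \to v$, apply the subgradient inequality $F_{n_k}(y) \ge F_{n_k}(x) + \nabla F_{n_k}(x)^\top (y-x)$ for arbitrary $y \in \R^d$ and let $k \to \infty$, using pointwise convergence $F_{n_k}(y) \to G(y)$ and $F_{n_k}(x) \to G(x)$. This gives $G(y) \ge G(x) + v^\top(y-x)$ for every $y$, so $v \in \partial G(x) = \{\nabla G(x)\}$, forcing $v = \nabla G(x)$. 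Since every cluster point of the bounded sequence equals $\nabla G(x)$, the full sequence converges to $\nabla G(x)$, completing the proof. The main obstacle is the second step: without the upgrade to locally uniform convergence and the resulting uniform Lipschitz bound, the $\nabla F_n(x)$ could a priori escape to infinity and the cluster-point argument would collapse; everything else is routine.
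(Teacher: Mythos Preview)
Your proof is correct and follows the classical convex-analysis route: upgrade pointwise convergence to locally uniform via Rockafellar's Theorem~10.8, extract a uniform local Lipschitz bound so that $\{\nabla F_n(x)\}$ is bounded, and then pass the subgradient inequality to the limit so that every cluster point lands in $\partial G(x)=\{\nabla G(x)\}$. This is essentially the proof of Rockafellar's Theorem~25.7, specialized to points where $G$ is differentiable.

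The paper takes a genuinely different route. It reduces to $d=1$ by treating each partial derivative separately, writes $\nabla F_n(x)$ as a limit of one-sided difference quotients $f_{nm}(x)=\big(F_n(x-1/m)-F_n(x)\big)/(-1/m)$, exploits the monotonicity of $m\mapsto f_{nm}(x)$ coming from convexity, and then argues that the double limit $\lim_n\lim_m f_{nm}$ can be interchanged via a dominated-convergence argument on a telescoping subsequence $m_k$ with summable increments. Your argument is more conceptual and works directly in $\R^d$ without a coordinate-wise reduction; it also makes transparent why boundedness of the gradients is the crux (your ``main obstacle'' remark is exactly right). The paper's approach is more self-contained in that it does not invoke the local-uniform-convergence theorem, but its limit-swap step is delicate because the subsequence $m_k=\mu(2^{-k},n)$ depends on $n$, so one has to be careful about what ``$\lim_n$ inside the sum'' means. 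Your route avoids that subtlety entirely.
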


Combining these two steps and Brenier's theorem, we show that CP-Flow with softplus-type activation function is distributionally universal.

\vspace{0.2cm}
\begin{restatable}[\textbf{Universality}]{thm}{univ}
\label{thm:cpflow_universal}
Given random variables $X\sim\mu$ and $Y\sim\nu$, with $\mu$ being absolutely continuous w.r.t. the Lebesgue measure,
there exists a sequence of ICNN $F_n$ with a softplus-type activation, such that $\nabla F_n\circ X \rightarrow Y$ in distribution. 
\end{restatable}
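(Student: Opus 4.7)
The plan is to combine three ingredients: Brenier's theorem (already stated in the excerpt), a universal approximation result for ICNNs on the class of convex functions (with softplus-type activation), and Theorem~\ref{thm:diff_convex_converge} which upgrades convergence of convex potentials to almost-everywhere convergence of their gradients. The high-level strategy is: pick the Brenier potential $G$ pushing $\mu$ to $\nu$; approximate $G$ by ICNNs $F_n$; conclude $\nabla F_n \to \nabla G$ a.e.; then transfer this to convergence in distribution of $\nabla F_n(X)$ to $Y$.

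First, I would apply Brenier's theorem to obtain a convex potential $G$ with $(\nabla G)_\# \mu = \nu$. The statement of Brenier quoted in the excerpt requires finite second moments, so a truncation reduction is needed: let $\mu_R, \nu_R$ denote $\mu, \nu$ conditioned on a ball $B_R$; these are compactly supported (hence have finite second moments), with $\mu_R$ absolutely continuous, and they converge weakly to $\mu, \nu$ as $R\to\infty$. For each $R$ Brenier yields a convex $G_R$ with $(\nabla G_R)_\# \mu_R = \nu_R$. Second, I would invoke a lemma (to be proved separately, presumably in the appendix on softplus-type activations) stating that any proper convex function $G_R$ can be approximated pointwise on $\mathbb{R}^d$ by an ICNN $F_{n,R}$ with softplus-type activation. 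Third, Theorem~\ref{thm:diff_convex_converge} gives $\nabla F_{n,R}(x) \to \nabla G_R(x)$ for Lebesgue-a.e. $x$, hence for $\mu$-a.e. $x$ by absolute continuity of $\mu$. Since a.s. convergence of random variables implies convergence in distribution, $\nabla F_{n,R}(X_R) \to Y_R$ in distribution as $n\to\infty$, where $X_R \sim \mu_R$ and $Y_R \sim \nu_R$.

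To pass from the truncated statement to the full one, I would use a diagonal argument in the Prokhorov metric $d_P$ on probability measures: for each $R$ choose $n(R)$ large enough that $d_P((\nabla F_{n(R),R})_\# \mu_R,\, \nu_R) < 1/R$, while also $d_P(\mu_R,\mu)<1/R$ and $d_P(\nu_R,\nu)<1/R$. Controlling the gap between $(\nabla F_{n(R),R})_\# \mu$ and $(\nabla F_{n(R),R})_\# \mu_R$ (using that $\mu_R$ and $\mu$ agree on $B_R$ up to a small mass, which vanishes as $R\to\infty$) yields a single sequence $F_n := F_{n(R),R}$ for which $(\nabla F_n)_\# \mu \to \nu$ weakly, which is exactly convergence of $\nabla F_n(X)$ to $Y$ in distribution.

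The main obstacle is the universal approximation step for ICNNs with softplus-type activation: unlike ReLU-ICNNs, where piecewise linear convex functions are easily realized, proving density of softplus ICNNs in the space of proper convex functions requires either a direct piecewise-affine approximation plus a ReLU-by-softplus limit (leveraging the paper's claim that softplus-type activations uniformly approximate ReLU), or an epi-convergence argument. Once that lemma is in hand, the remaining steps are routine applications of Brenier's theorem, Theorem~\ref{thm:diff_convex_converge}, and standard weak-convergence machinery.
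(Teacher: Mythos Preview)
Your proposal is correct and follows essentially the same route as the paper's proof: Brenier potential, ICNN density in convex functions (the paper proves this as a separate proposition via max-of-affines and the uniform ReLU-by-softplus approximation, exactly as you anticipate), Theorem~\ref{thm:diff_convex_converge} to pass to gradients, and a diagonal argument in a weak-convergence metric to remove the second-moment hypothesis. The only cosmetic difference is the truncation device: the paper redistributes the tail mass uniformly on $B_k$ (keeping $\mu_k$ absolutely continuous and making $X_k=X$ on $\{\|X\|\le k\}$ so the transfer from $X_k$ to $X$ is immediate), whereas you condition on $B_R$; both yield the same uniform-in-$T$ control of $d_P(T_\#\mu,\,T_\#\mu_R)$ by the vanishing tail mass.
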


\paragraph{N.B.} In the theorem we do not require the second moment to be finite, as for arbitrary random variables we can apply the standard truncation technique and redistribute the probability mass so that the new random variables are almost surely bounded.
For probability measures with finite second moments, we indeed use the gradient map of ICNN to approximate the optimal transport map corresponding to the Brenier potential. 
In the following theorem, we show that the optimal transport map is the only such mapping that we can approximate if we match the distributions.

\vspace{0.2cm}
\begin{restatable}[\textbf{Optimality}]{thm}{optim}
\label{thm:optimal}
Let $G$ be the Brenier potential of $X\sim\mu$ and $Y\sim\nu$, and let $F_n$ be a convergent sequence of differentiable, convex potentials, such that $\nabla F_n\circ X \rightarrow Y$ in distribution. 
Then $\nabla F_n$ converges almost surely to $\nabla G$.
\end{restatable}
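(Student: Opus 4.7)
The plan is to identify the limit potential of the sequence $F_n$ and then invoke the uniqueness clause of Brenier's theorem to pin it down.

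First, since $\{F_n\}$ is a convergent sequence of finite-valued convex functions on $\mathbb{R}^d$, I would denote its pointwise limit by $F_\infty := \lim_n F_n$. Pointwise limits of finite convex functions are convex, and finite pointwise limits are proper, so $F_\infty$ is a proper convex function. Invoking the previous Theorem \ref{thm:diff_convex_converge} then gives that $F_\infty$ is differentiable and $\nabla F_n(x) \to \nabla F_\infty(x)$ for Lebesgue-a.e.\ $x \in \mathbb{R}^d$. Because the Brenier potential $G$ of $\mu,\nu$ is assumed to exist, $\mu$ is absolutely continuous with respect to Lebesgue measure, so the convergence $\nabla F_n \to \nabla F_\infty$ in fact holds $\mu$-almost surely.

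Second, I would verify that the limit map pushes $\mu$ forward to $\nu$. Since $\nabla F_n(X) \to \nabla F_\infty(X)$ almost surely under $X\sim\mu$, this convergence also holds in distribution. By hypothesis $\nabla F_n(X) \to Y$ in distribution, and uniqueness of distributional limits then forces $\nabla F_\infty(X) \stackrel{d}{=} Y$, i.e.\ $(\nabla F_\infty)_\# \mu = \nu$.

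Third, I would apply Brenier's theorem. It guarantees that, up to a $\mu$-null set, $\nabla G$ is the unique map of the form ``gradient of a convex potential'' that transports $\mu$ to $\nu$. Since $F_\infty$ is convex and $\nabla F_\infty$ is such a transport map, we conclude $\nabla F_\infty = \nabla G$ $\mu$-a.s. Chaining this with the first step yields $\nabla F_n \to \nabla G$ almost surely under $\mu$, which is the claim.

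The main obstacle I anticipate is the third step: Brenier's theorem as cited requires the finiteness of second moments of $\mu$ and $\nu$, while the optimality statement above does not assume this explicitly. If necessary, I would address this by the standard truncation argument already used in the proof of universality: restrict $X$ and $Y$ to compact sets, apply Brenier uniqueness on the truncated measures, and pass to the limit, using that pointwise convergence of $F_n$ is preserved. A secondary technical point is ensuring that the pointwise limit $F_\infty$ really is proper convex and hence eligible for Theorem \ref{thm:diff_convex_converge}; this is immediate as soon as $F_\infty$ is finite everywhere, which follows from the convergence of the sequence on all of $\mathbb{R}^d$.
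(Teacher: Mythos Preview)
Your proposal is correct and follows essentially the same route as the paper: define $F_\infty$ as the pointwise limit, use Theorem~\ref{thm:diff_convex_converge} to get $\nabla F_n\to\nabla F_\infty$ a.e., show $(\nabla F_\infty)_\#\mu=\nu$ via uniqueness of weak limits, and conclude $\nabla F_\infty=\nabla G$ by uniqueness of the Brenier map. The only difference is that the paper reproves the uniqueness step in a self-contained way (the R\"uschendorf argument via the Fenchel--Young inequality, showing that any gradient-of-convex transport maximizes $\E[X^\top Z]$), whereas you simply cite the uniqueness clause of Brenier's theorem; both are fine, and your concern about second moments applies equally to the paper's version.
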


The theorem states that 
in practice, even if we
optimize according to some loss that traces the convergence in distribution,
our model is still able to recover the optimal transport map, as if we were optimizing according to the transport cost. 
This allows us to estimate optimal transport maps without solving the constrained optimization in (\ref{eq:monge}). See \citet{seguy2018large} for some potential applications of the optimal transport map, such as domain adaptation or domain translation. 

\section{Experiment}
We use CP-Flow to perform density estimation (RHS of (\ref{eq:cov})) and variational inference (LHS of (\ref{eq:cov})) to assess its approximation capability, and the effectiveness of the proposed gradient estimator.
All the details of experiments can be found in Appendix \ref{app:exp}.
Code is available at \href{https://github.com/CW-Huang/CP-Flow}{https://github.com/CW-Huang/CP-Flow}.

\paragraph{ICNN Architecture}
Despite the universal property, having a poor parameterization can lead to difficulties in optimization and limit the effective expressivity of the model. 
We propose an architectural enhancement of ICNN, defined as follows (note the change in notation: instead of writing the pre-activations $z$, we use $h$ to denote the activated units):
\begin{align}
F^{aug}(x)&:=L_{K+1}^+ (h_K) + L_{K+1} (x) \nonumber
\\
h_k := \texttt{concat}([\widetilde{h}_k, h_k^{aug}])
\qquad
\widetilde{h}_k &:= s(L_{k}^+ (h_{k-1}) + L_{k} (x))
\qquad
h_k^{aug} = s(L_{k}^{aug} (x))
\end{align}
where half of the hidden units are directly connected to the input, so the gradient would have some form of skip connection. 
We call this the input-augmented ICNN. 
Unless otherwise stated, we use the input-augmented ICNN as the default architecture.

\subsection{Toy examples}
\begin{wrapfigure}[15]{r}{0.5\textwidth}
\vspace{-3.0em}
\begin{center}
    \centering
    \includegraphics[width=0.12\textwidth]{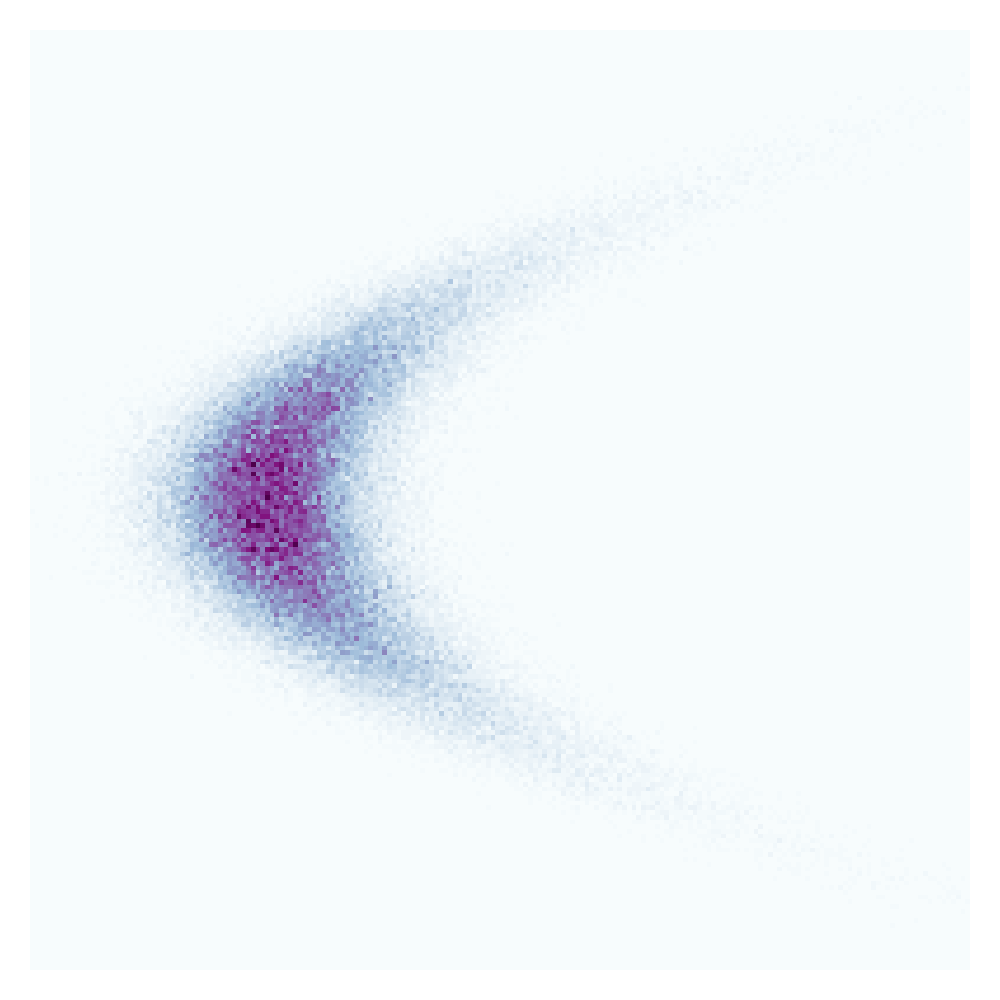}
    \includegraphics[width=0.12\textwidth]{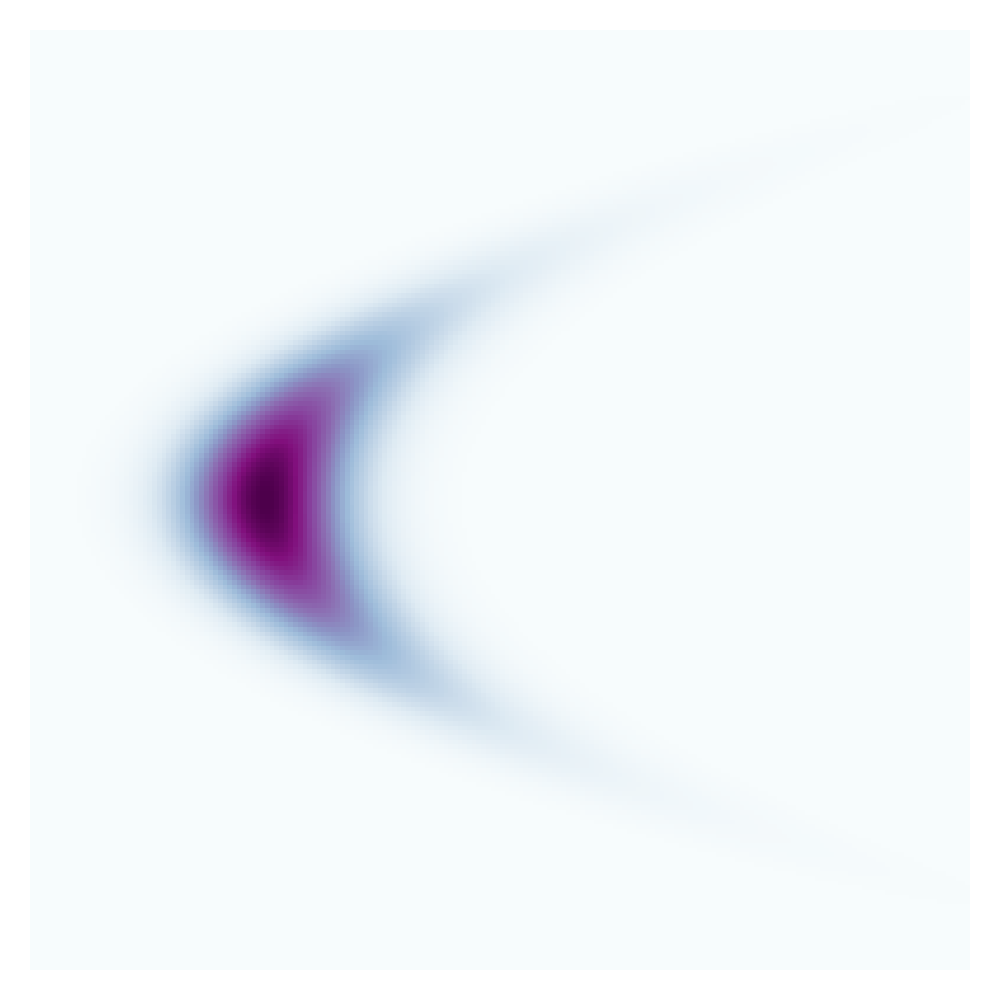}
    \includegraphics[width=0.12\textwidth]{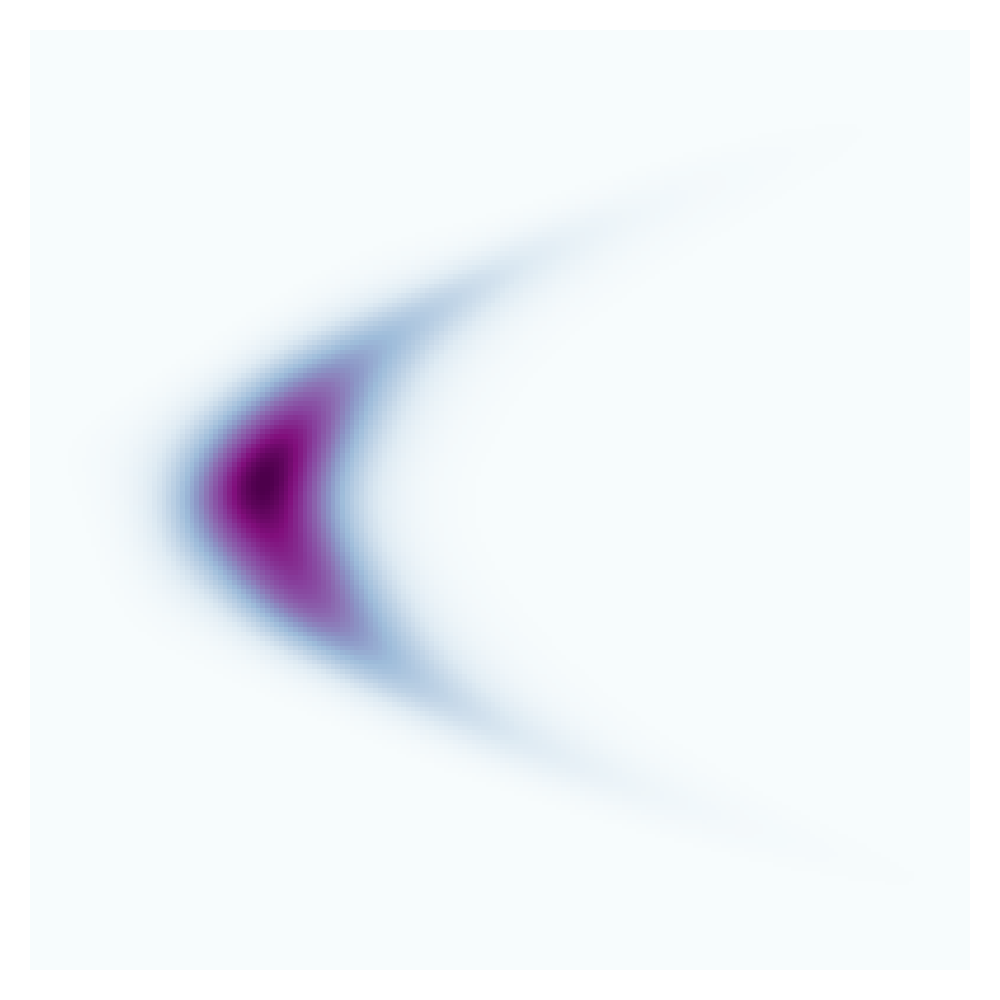}
    \includegraphics[width=0.12\textwidth]{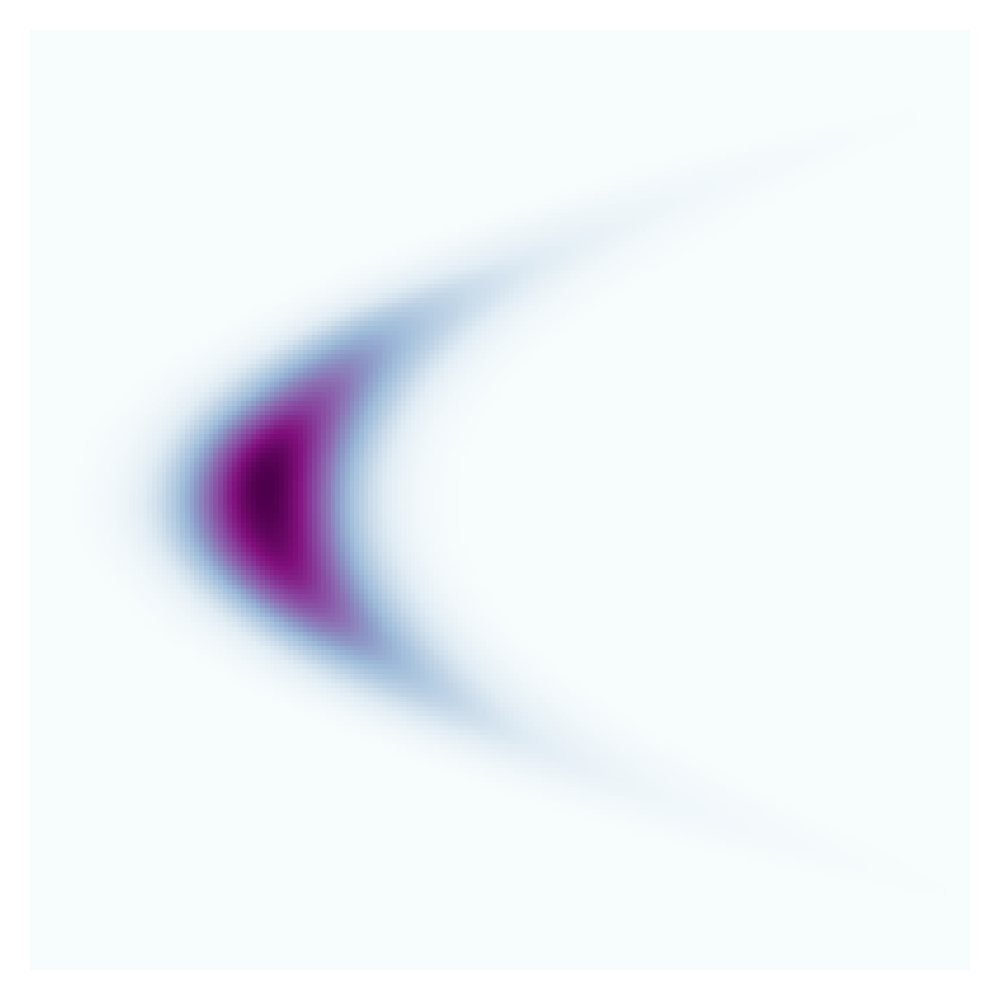} \\
    \includegraphics[width=0.12\textwidth]{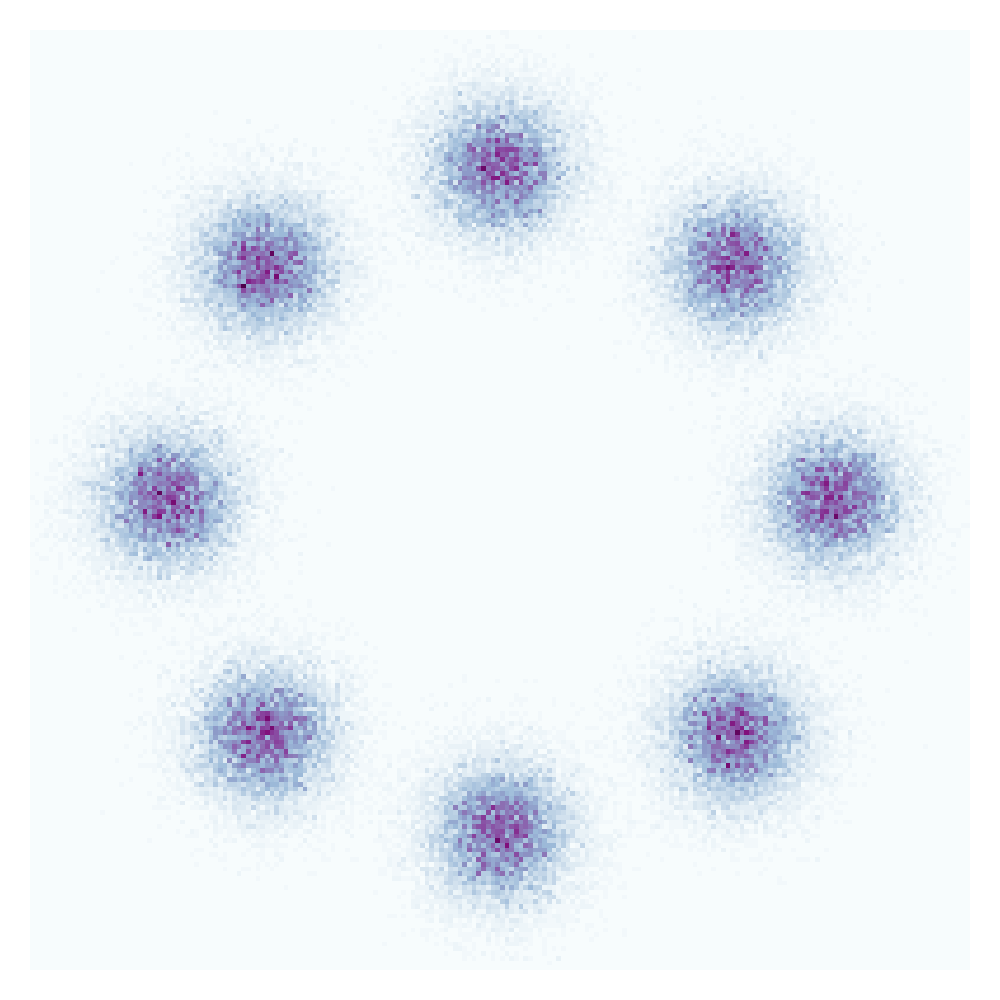}
    \includegraphics[width=0.12\textwidth]{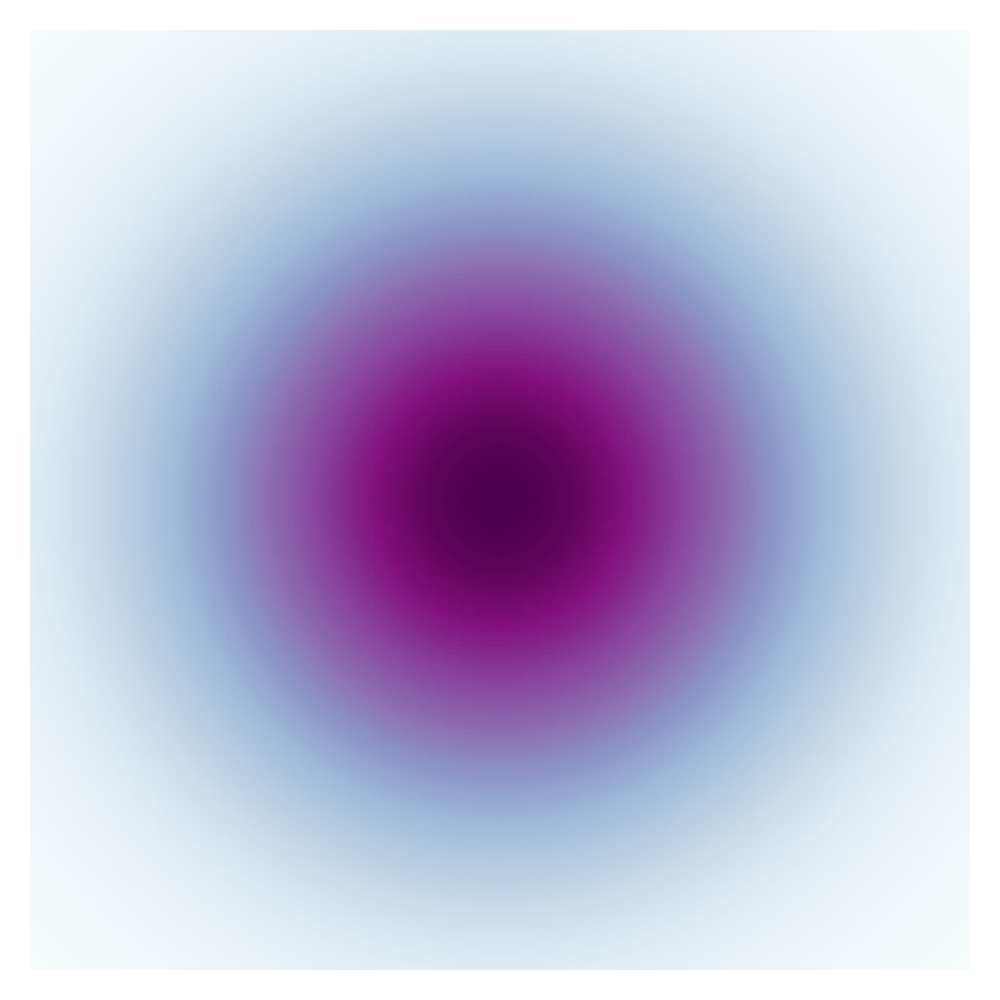}
    \includegraphics[width=0.12\textwidth]{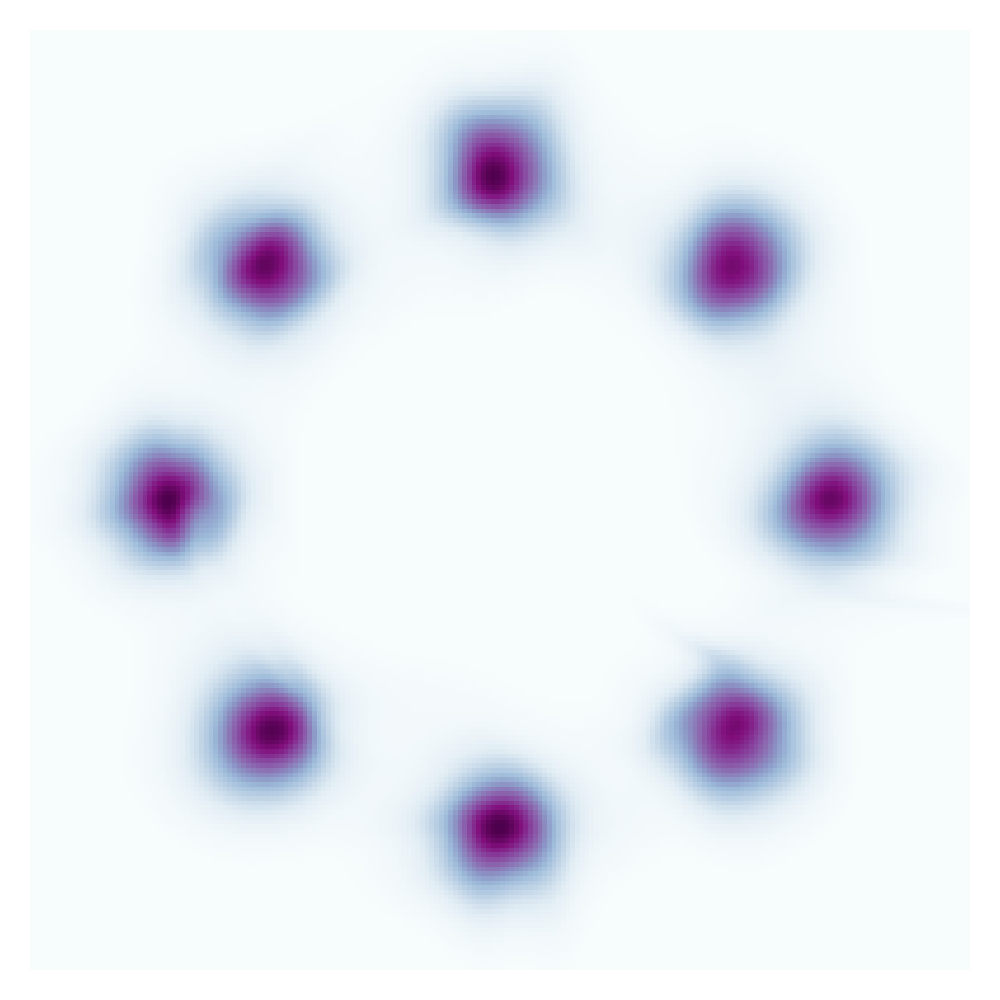}
    \includegraphics[width=0.12\textwidth]{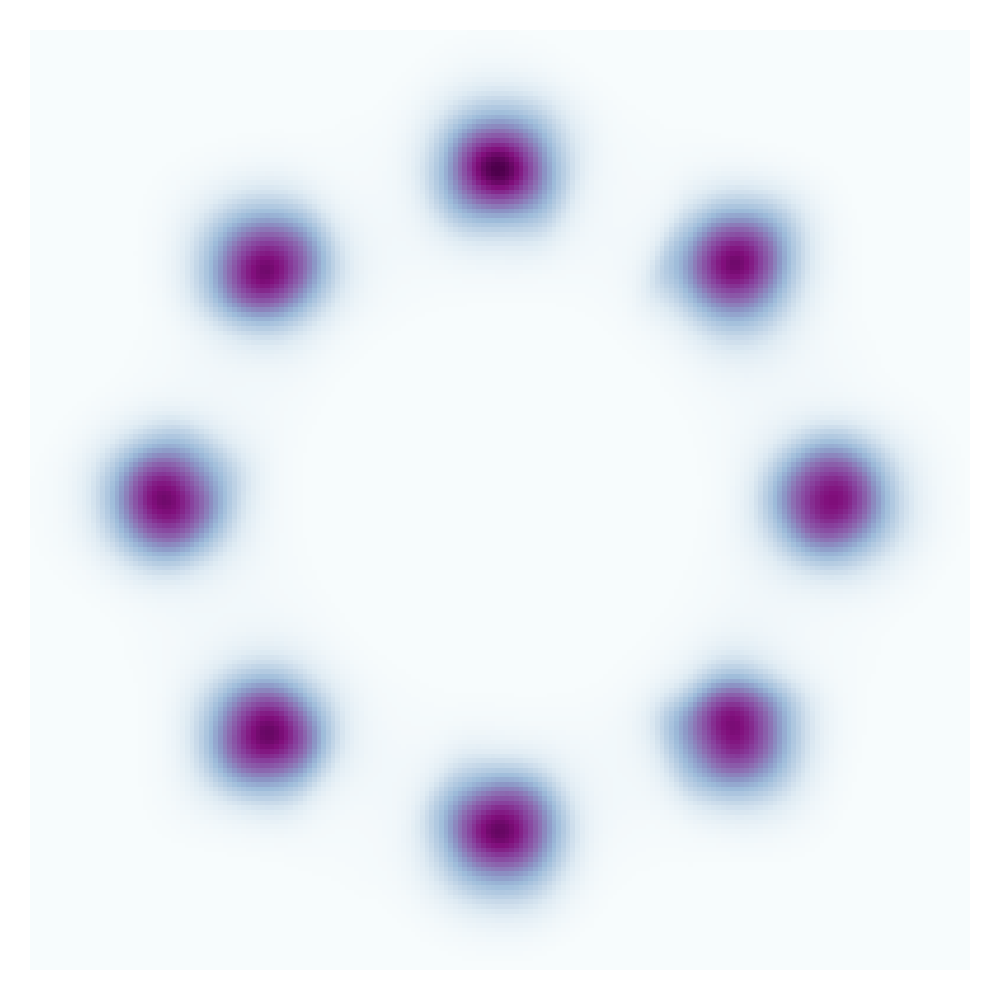} \\
    \includegraphics[width=0.12\textwidth]{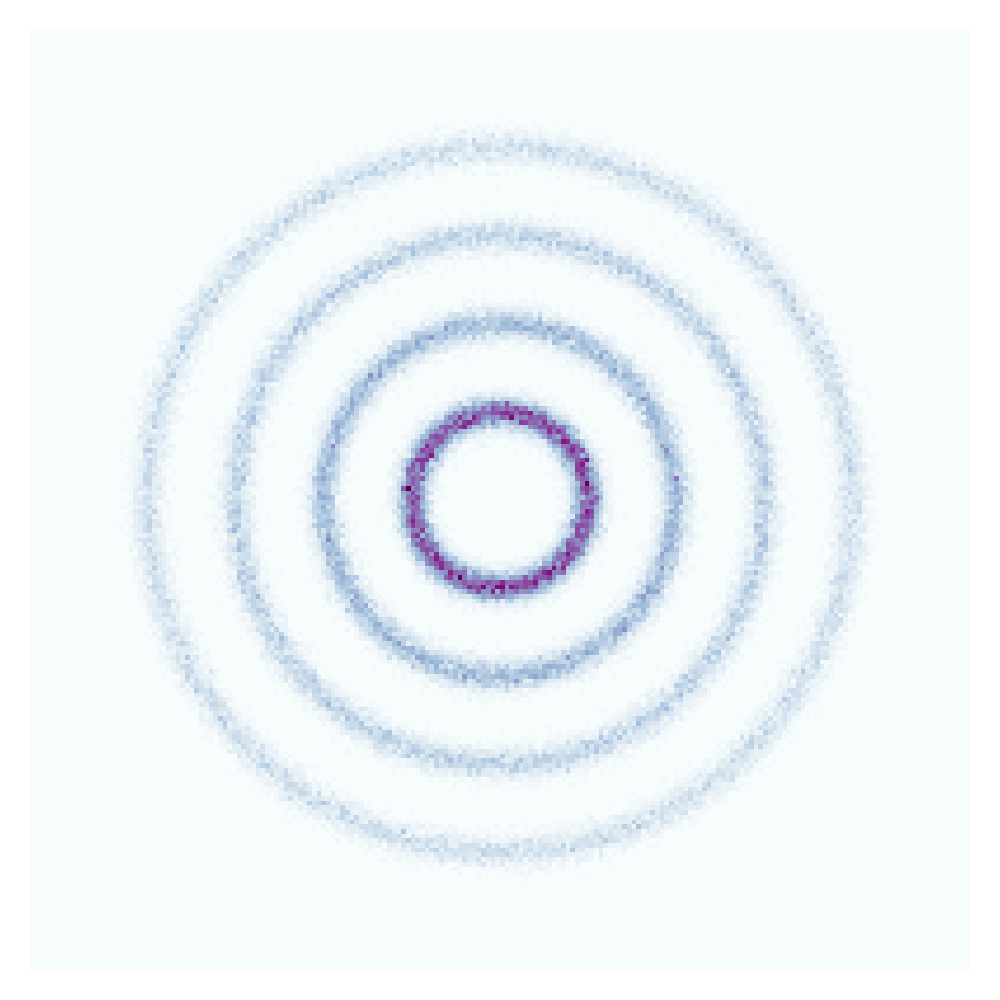}
    \put(-32,-5){\small Data}
    \includegraphics[width=0.12\textwidth]{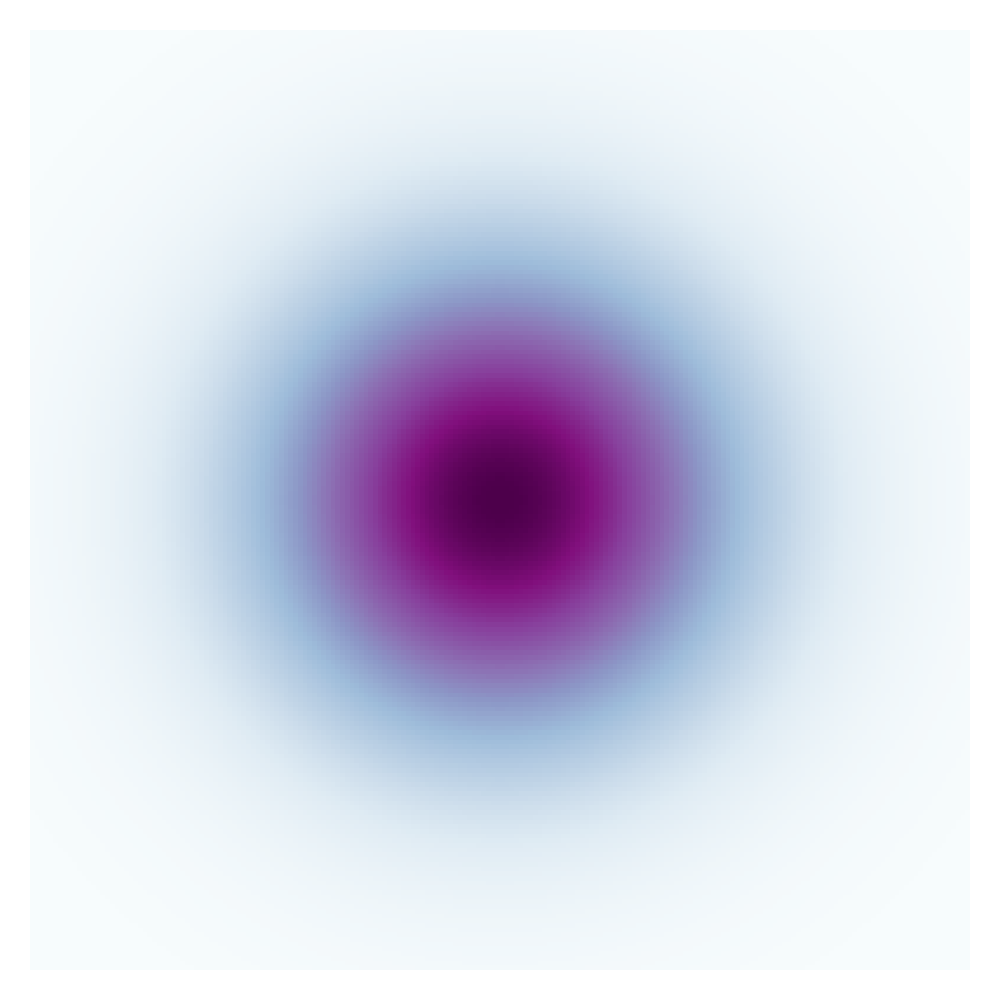}
    \put(-32,-5){\small MAF}
    \includegraphics[width=0.12\textwidth]{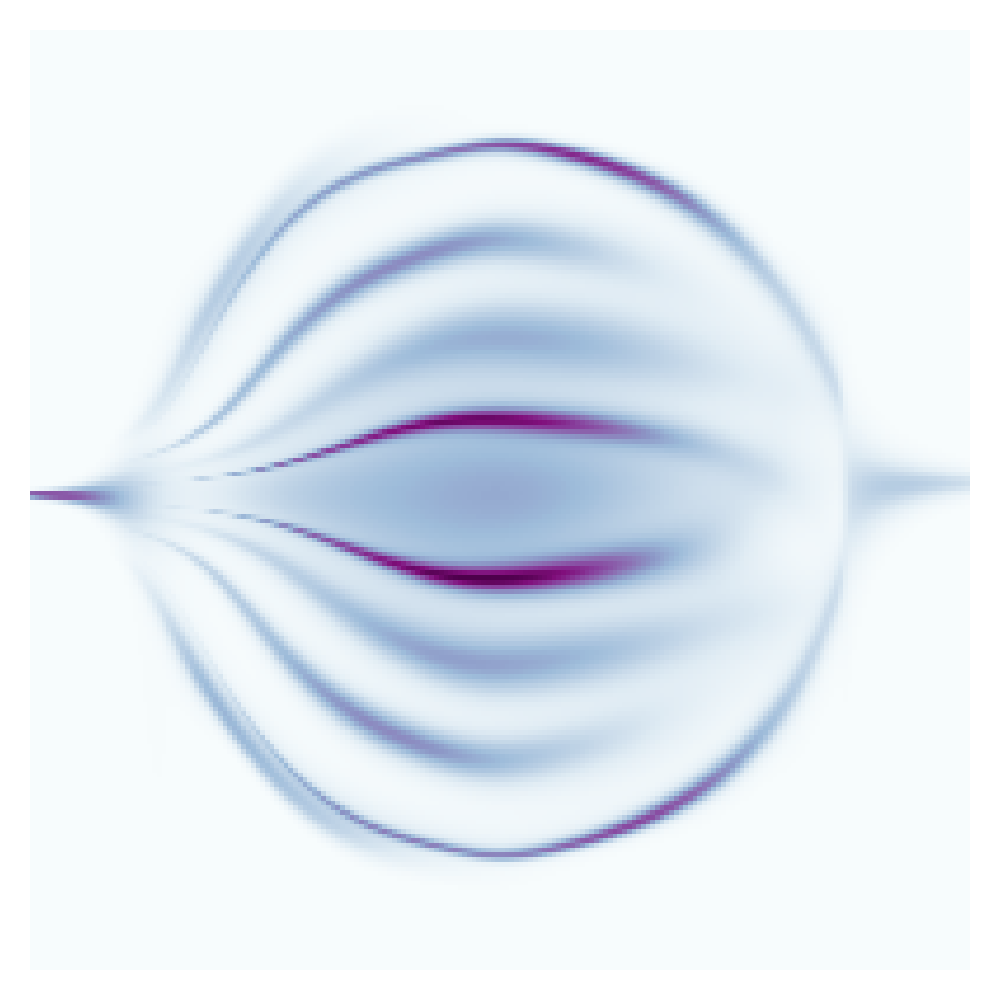}
    \put(-32,-5){\small NAF}
    \includegraphics[width=0.12\textwidth]{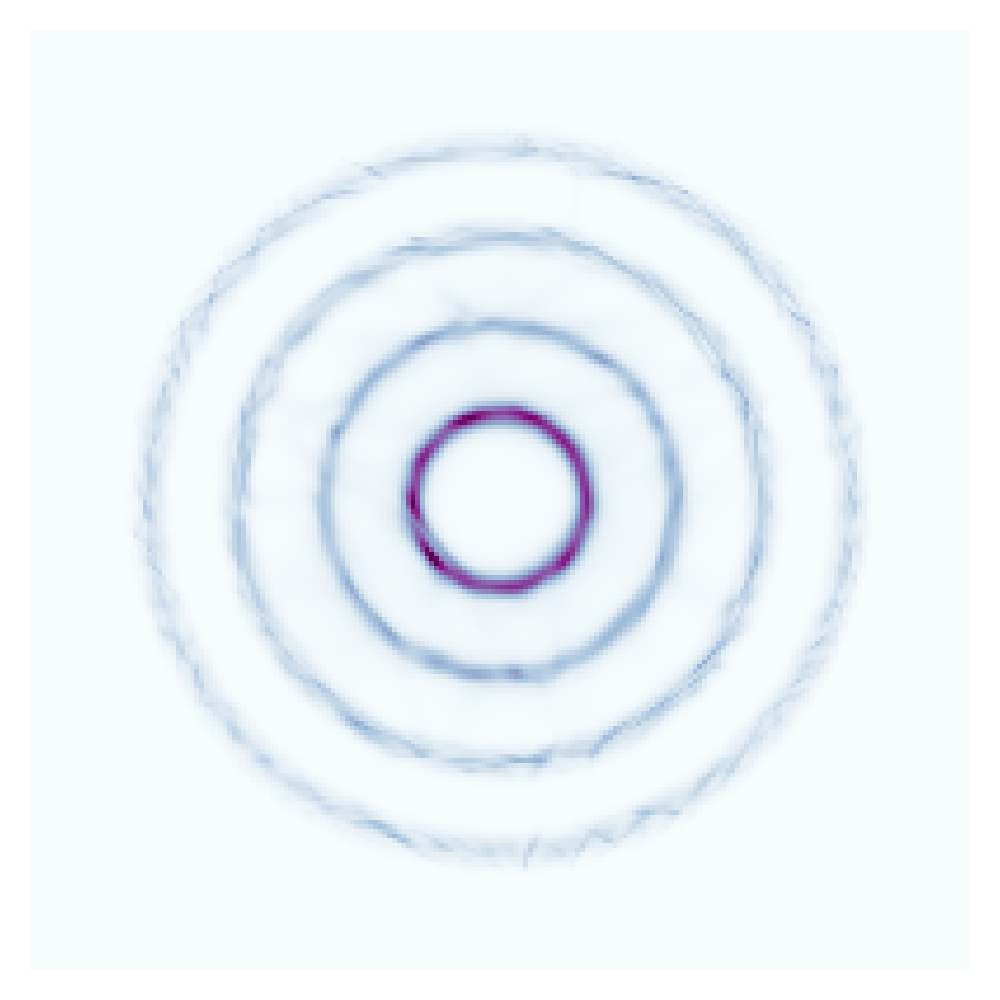}
    \put(-40,-5){\small CP-Flow}
    \caption{Learning toy densities.}
    \label{fig:toy}
\end{center}
\end{wrapfigure}
Having distributional universality for a single flow layer means that we can achieve high expressiveness without composing too many flows. 
We demonstrate this by fitting the density on some toy examples taken from \citet{papamakarios2017masked} and \citet{behrmann2019invertible}.
We compare with the masked autoregressive flow (MAF, \citet{papamakarios2017masked}) and the neural autoregressive flow (NAF, \citep{huang2018neural}).
Results are presented in \cref{fig:toy}. 
We try to match the network size for each data.
All models fit the first data well.
As affine couplings cannot split probability mass, MAF fails to fit to the second and third datasets\footnote{\citet{behrmann2019invertible} demonstrates one can potentially improve the affine coupling models by composing many flow layers. But here we restrict the number of flow layers to be 3 or 5.}. 
Although the last dataset is intrinsically harder to fit (as NAF, another universal density model, also fails to fit it well), the proposed method still manages to learn the correct density with high fidelity.

\subsection{Approximating optimal coupling}
\begin{figure}
    \centering
    \includegraphics[width=0.24\textwidth]{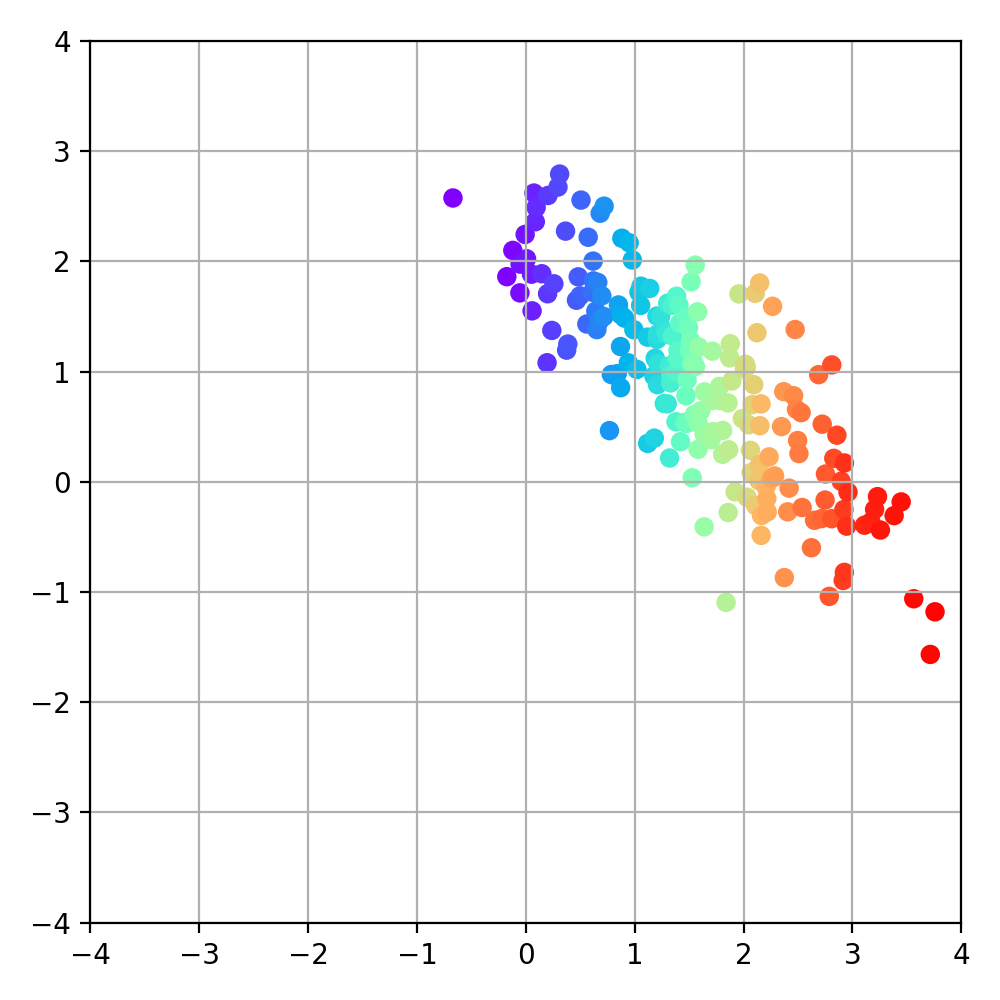}
    \put(-35,95){data: $x$}
    \hfill
    \includegraphics[width=0.24\textwidth]{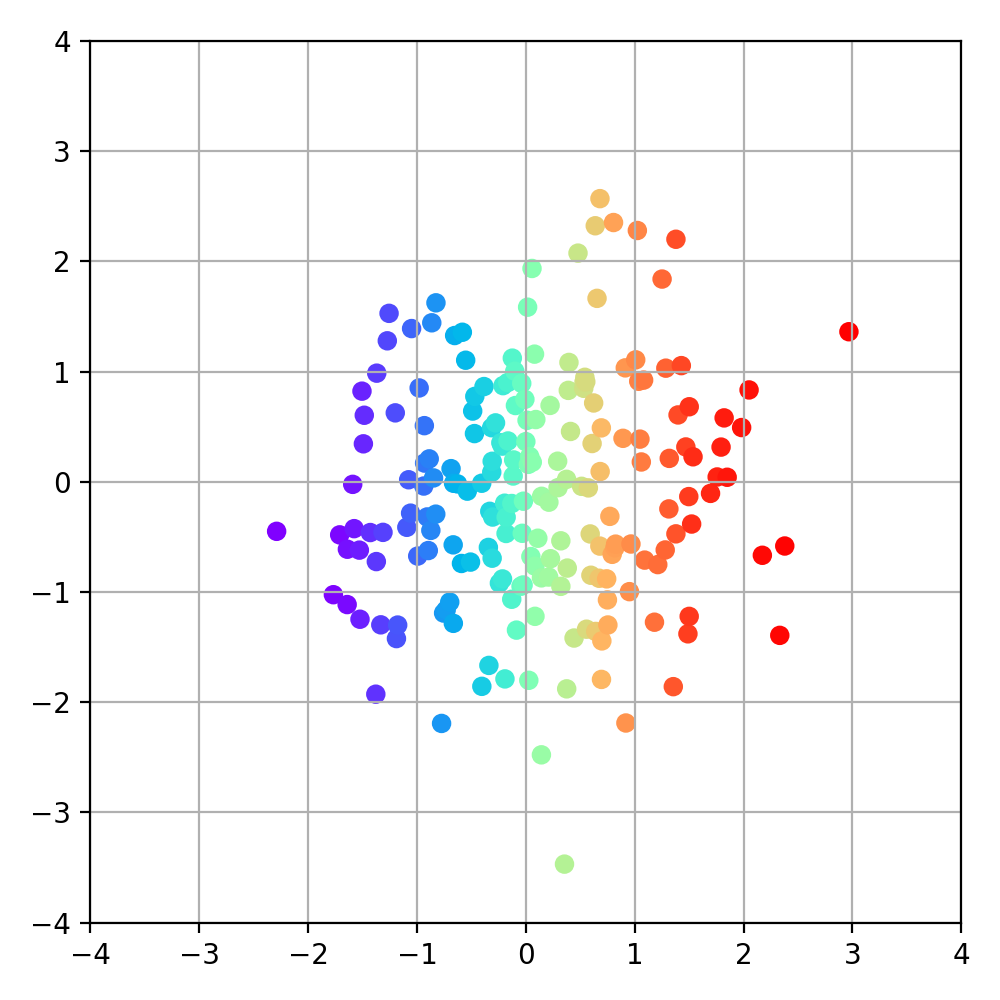}
    \put(-55,95){$z=f_{iaf}(x)$}
    \hfill
    \includegraphics[width=0.24\textwidth]{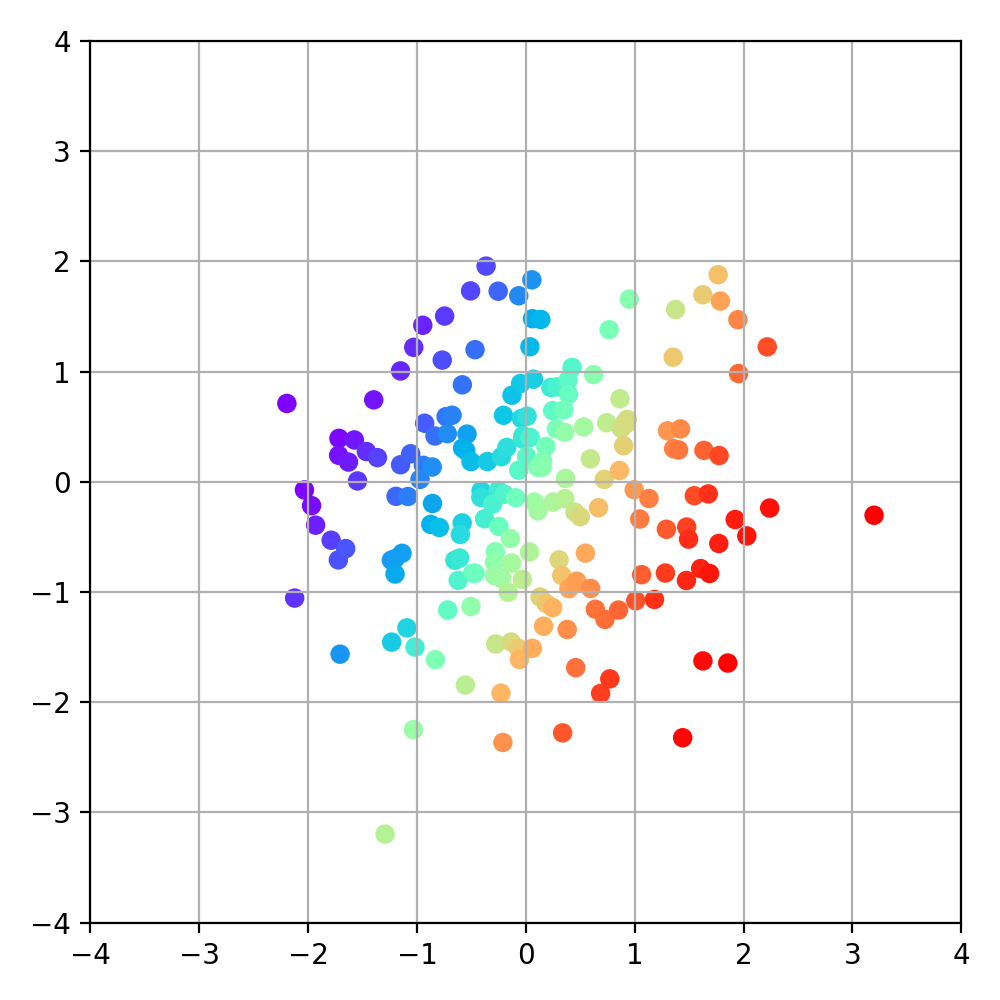}
    \put(-50,95){$z=f_{cp}(x)$}
    \hfill
    \begin{subfigure}{0.24\textwidth}
    \vspace{-10em}
    \includegraphics[width=1.0\textwidth]{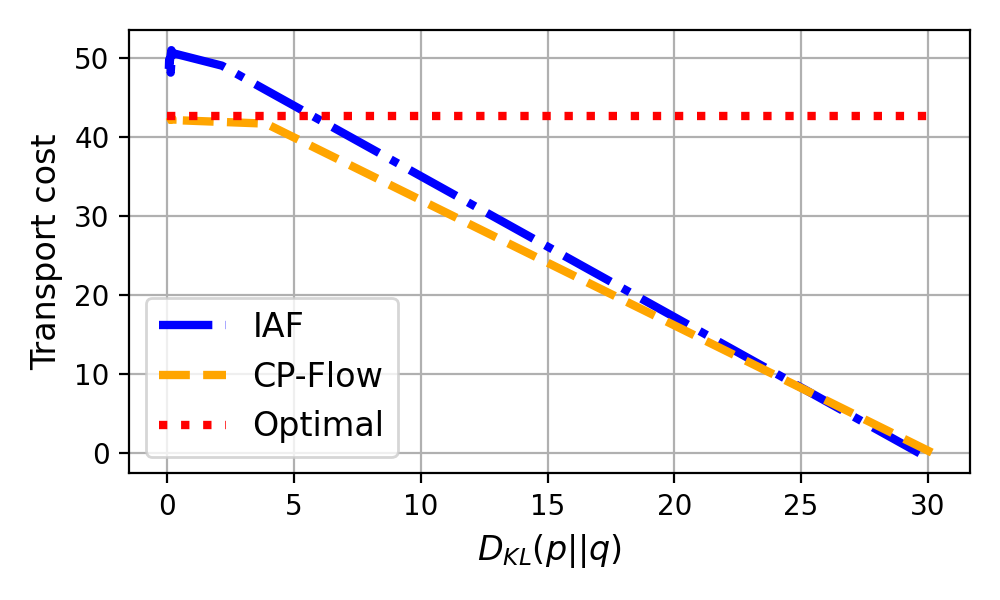}
    \put(-30,35){$d=8$}
    \vfill
    \includegraphics[width=1.0\textwidth]{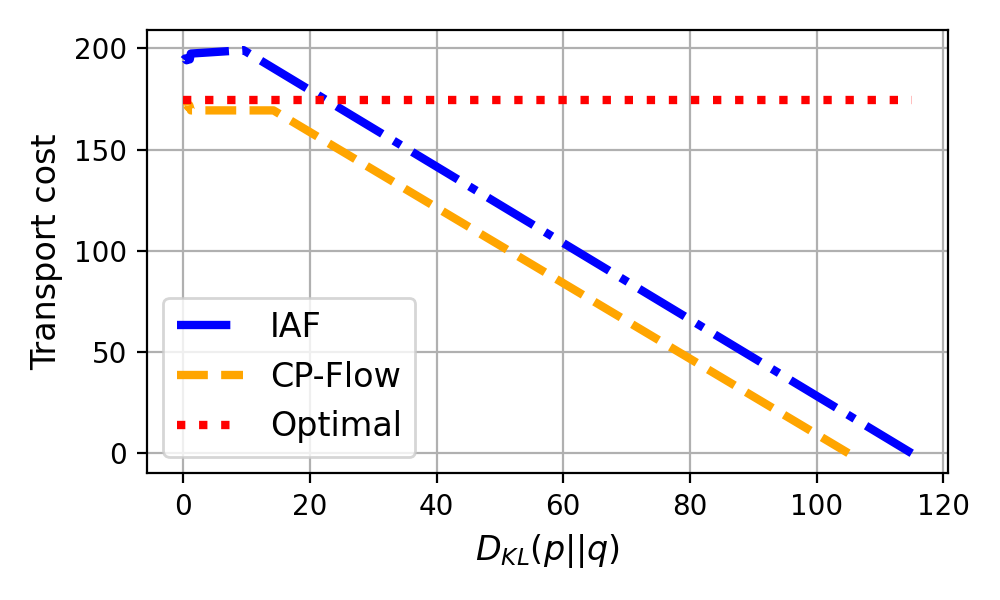}
    \put(-35,35){$d=16$}
    \end{subfigure}
    \caption{\small Approximating optimal transport map via maximum likelihood (minimizing KL divergence). 
    In the first figure on the left we show the data in 2 dimensions.
    The datapoints are colored according to their horizontal values ($x_1$). 
    The flows $f_{iaf}$ and $f_{cp}$ are trained to transform the data into a standard Gaussian prior. 
    In the figures on the right, we plot the expected quadratic transportation cost versus the KL divergence for different numbers of dimensionality. 
    During training the KL is minimized, so the curves read from the right to the left. 
    }
    \label{fig:ot}
\end{figure}
As predicted by Theorem~\ref{thm:optimal}, CP-Flow is guaranteed to converge to the optimal coupling minimizing the expected quadratic cost. 
We empirically verify it by learning the Gaussian density and comparing the expected quadratic distance between the input and output of the flow against $J_{{||x-y||}^2}$
between the Gaussian data and the standard Gaussian prior (as there is a closed-form expression). 
In \cref{fig:ot}, we see that the transport cost gets closer to the optimal value when the learned density approaches the data distribution (measured by the KL divergence). 
We compare against the linear inverse autoregressive flow~\citep{kingma2016improved}, which has the capacity to represent the multivariate Gaussian density, yet it does not learn the optimal coupling.

\subsection{Density estimation}
\begin{table}[]
    \centering
    \small
    \ra{1.1}
    \begin{tabular}{@{} lrrrrr @{}}
        \toprule
        Model & \textbf{\textsc{Power}} & \textbf{\textsc{Gas}} & \textbf{\textsc{Hepmass}} & \textbf{\textsc{Miniboone}} & \textbf{\textsc{BSDS300}} \\
        \cmidrule{1-6}
        Real NVP~{\small\citep{dinh2017density}} & -0.17 & -8.33 & 18.71 & 13.55 & -153.28 \\
        FFJORD~{\small\citep{grathwohl2018ffjord}} & -0.46 & -8.59 & 14.92 & 10.43 & -157.40 \\
        MADE~{\small\citep{pmlr-v37-germain15}} & 3.08 & -3.56 & 20.98 & 15.59 & -148.85 \\
        MAF~{\small\citep{papamakarios2017masked}} & -0.24 & -10.08 & 17.70 & 11.75 & -155.69 \\
        TAN~{\small\citep{pmlr-v80-oliva18a}} & -0.48 & -11.19 & 15.12 & 11.01 & -157.03 \\
        NAF~{\small\citep{huang2018neural}} & -0.62 & -11.96 & 15.09 & 8.86 & -157.73 \\
        \cmidrule{1-6}
        CP-Flow (Ours) & -0.52 & -10.36 & 16.93 & 10.58 & -154.99\\

        \bottomrule
    \end{tabular}
    \caption{Average test negative log-likelihood (in nats) of tabular datasets in~\citet{papamakarios2017masked} for density estimation models (lower is better). 
    Standard deviation is presented in the \cref{app:tabular}.}
    \label{tab:tab_results}
\end{table}

We demonstrate the efficacy of our model and the proposed gradient estimator by performing density estimation on the standard benchmarks.

\paragraph{Tabular Data} We use the datasets preprocessed by \citet{papamakarios2017masked}.
In 
\cref{tab:tab_results}, we report average negative log-likelihood estimates evaluated on held-out test sets, 
for the best hyperparameters found via grid search.
The search was focused on the number of flow blocks, the width and depth of 
the ICNN potentials. 
See \cref{app:tabular} for details.
Our models perform competitively against
alternative approaches in the literature. 
We also perform an ablation on the CG error tolerance and ICNN architectures in \cref{app:ablation}. 

\paragraph{Image Data} 
Next, we apply CP-Flow to model the density of standard image datasets, MNIST and CIFAR-10. For this, we use convolutional layers in place of fully connected layers. Prior works have had to use large architectures, with many flow blocks composed together, resulting in a large number of parameters to optimize. While we also compose multiple blocks of CP-Flows, we find that CP-Flow can perform relatively well with fewer number of parameters (\cref{tab:img_results}). Notably, we achieve comparable bits per dimension to Neural Spline Flows~\citep{durkan2019neural}---another work promoting fewer parameters---while having using around 16\% number of parameters. 

As prior works use different architectures with widely varying hyperparameters, we perform a more careful ablation study using coupling~\citep{dinh2014nice,dinh2017density} and invertible residual blocks~\citep{chen2019residual}. We replace each of our flow blocks with the corresponding baseline. We find that on CIFAR-10, the baseline flow models do not perform nearly as well as CP-Flow. We believe this may be because CP-Flows are universal with just one flow block, whereas coupling and invertible residual blocks are limited in expressivity or Lipschitz-constrained.

\newcommand{\tcen}[1]{\multicolumn{1}{c}{#1}}
\newcommand{\tnan}{---}
\begin{table}\centering
\small
\ra{1.1}
\begin{tabular}{@{} l r r r r r @{}}\toprule
  & \multicolumn{2}{c}{\textbf{\textsc{MNIST}}} &
  & \multicolumn{2}{c}{\textbf{\textsc{CIFAR-10}}} \\
\cmidrule(lr){2-3} \cmidrule(l){5-6} 
Model & \tcen{Bits/dim} & \tcen{N. params}
& & \tcen{Bits/dim} & \tcen{N. params} \\
\cmidrule(r){1-1}\cmidrule(lr){2-3} \cmidrule(l){5-6} 
Real NVP~{\small\citep{dinh2017density}} & 1.05 & \na & & 3.49 & \na \\
Glow~{\small\citep{kingma2018glow}} & 1.06 & \na & & 3.35 & 44.0M$^\dagger$ \\
RQ-NSF~{\small\citep{durkan2019neural}} & \tnan & \tnan & & 3.38 & 11.8M$^\dagger$ \\
Residual Flow~{\small\citep{chen2019residual}} & 0.97 & 16.6M$^\ddagger$ & & 3.28 & 25.2M$^\ddagger$ \\
\cmidrule(r){1-1}\cmidrule(lr){2-3} \cmidrule(l){5-6} 
Coupling Block Ablation & 1.02 & 3.1M & & 3.58 & 2.9M \\
Residual Block Ablation & 1.04 & 2.9M & & 3.46 & 3.1M \\
CP-Flow (Ours) & 1.02 & 2.9M & & 3.40 & 1.9M \\
\bottomrule
\end{tabular}
\caption{Negative log-likelihood (in bits) on held-out test data (lower is better). $^\dagger$Taken from~\citet{durkan2019neural}. $^\ddagger$Obtained from official open source code.}
\label{tab:img_results}
\end{table}

\subsection{Amortizing ICNN for Variational Inference}

\begin{wraptable}[13]{r}{0.45\textwidth}
\vspace{-1.2em}
\begin{center}
    \centering
    \small
    \ra{1.1}
    \setlength{\tabcolsep}{2pt}
    \resizebox{\linewidth}{!}{%
    \begin{tabular}{@{} lrrr @{}}
        \toprule
        {} & {\textbf{\textsc{\scriptsize Freyfaces}}} & {\textbf{\textsc{\scriptsize Omniglot}}} & {\textbf{\textsc{\scriptsize Caltech}}} \\
        \cmidrule{1-4}
        { Gaussian} & 4.53 & 104.28 & 110.80 \\
        { Planar} & 4.40 & 102.65 & 109.66 \\
        { IAF} & 4.47 & 102.41 & 111.58 \\
        { Sylvester} & 4.45 & 99.00 & 104.62 \\
        \cmidrule{1-4}
        { CP-Flow (vanilla)} & 4.47 & 102.06 & 106.53 \\
        { CP-Flow (aug)} & 4.45 & 100.82 & 105.17 \\
        \bottomrule
    \end{tabular}
    }
    \caption{\small Negative ELBO of VAE (lower is better).
    Standard deviation reported in \cref{app:vae}.}
    \label{tab:vae_results}
\end{center}
\end{wraptable}

Normalizing flows also allow us to employ a larger, more flexible family of distributions for variational inference \citep{rezende2015variational}. 
We replicate the experiment conducted in \citet{van2018sylvester} to enhance the variational autoencoder \citep{kingma2013auto}.
For inference amortization, we use the partially input convex neural network from \citet{amos2017input}, and use the output of the encoder as the additional input for conditioning. 
As \cref{tab:vae_results} shows, the performance of CP-Flow is close to the best reported in \citet{van2018sylvester} without changing the experiment setup. 
This shows that the convex potential parameterization along with the proposed gradient estimator can learn to perform accurate amortized inference. 
Also, we show that replacing the vanilla ICNN with the input-augmented ICNN leads to improvement of the likelihood estimates.

\section{Conclusion}
We propose a new parameterization of normalizing flows using the gradient map of a convex potential. 
We make connections to the optimal transport theory to show that the proposed flow is a universal density model, and leverage tools from convex optimization to enable efficient training and model inversion. 
Experimentally, we show that the proposed method works reasonably well when evaluated on standard benchmarks. 

Furthermore, we demonstrate that the performance can be improved by designing better ICNN architectures.
We leave the exploration for a better ICNN and convolutional ICNN architecture to improve density estimation and generative modeling for future research.

\section*{Acknowledgements}
We would like to acknowledge the Python community \citep{van1995python, oliphant2007python} for developing the tools that enabled this work, including
numpy \citep{oliphant2006guide,van2011numpy, walt2011numpy, harris2020array},
PyTorch \citep{paszke2019pytorch},
Matplotlib \citep{hunter2007matplotlib},
seaborn \citep{seaborn},
pandas \citep{mckinney2012python}, and
SciPy \citep{jones2014scipy}.

\bibliography{ref}
\bibliographystyle{iclr2021_conference}

\newpage
\appendix

\section{Invertibility of CP-Flow}
\label{app:invertibility}

In this section, we formally discuss the invertibility of CP-Flow, and establish the connection to convex conjugate (Legendre-Fenchel transform).  
We work with $C^2$ convex potentials $F:\R^d\rightarrow\R$; i.e. $F$ is convex and twice continuously differentiable.
We first check that $f:=\nabla F$ is injective if $F$ is \emph{strictly} convex. 
This is because if $F$ is twice differentiable and strictly convex, the Hessian matrix $H:=\nabla^2F$ is symmetric positive definite, and thus
$z^\top H z >0$ for any non-zero vector $z$. 
We then have, for any $x\neq y$,
$$f(x) - f(y) = \int_\gamma H(\gamma) d\gamma = \int_0^1 H(y + t (x-y)) (x-y) dt, $$
where we used the \emph{gradient theorem} for the line integral on a path $\gamma$ connecting $x$ and $y$, and substituted $t \mapsto y + t (x-y)$ for $t$ going from 0 to 1. 
Positive-definiteness implies $(x-y)^\top(f(x)-f(y)) > 0$, and since $x\neq y$, $f(x)\neq f(y)$.

Now we further assume $F$ is \emph{strongly} convex.
Then for any $y$, $F_y(x) := F(x) - x^\top y$ is also strongly convex, which, by Taylor's theorem, implies that we can place a quadratic lower bound on $F_y$ and thus $F_y(x)\rightarrow\infty$ whenever $||x||\rightarrow\infty$.
This means for a sufficiently large constant $R$, the sub-level set $S_R:=\{x:F_y(x) \leq R\}$ is non-empty and compact. 
By the \emph{Weierstrass extreme value theorem}, $F_y$ (restricted on $S_R$) has a minimizer $x^*$, and it is also the global minimizer over $\R^d$. 
Now lets differentiate $F_y$ at $x^*$, which gives $\nabla F(x^*) - y$.
The gradient must be equal to $0$ by the first order condition, meaning $x^*$ is the inverse point of $y$ under $f$. 
Since this holds for any $y\in\R^d$, $f$ is surjective.  

Now recall the definition of the convex conjugate: $$F^*(y):=\sup_{x} x^\top y - F(x) = {x^*}^\top y - F(x^*),$$ where $x^*$ found by the above procedure depends on $y$. Note that $x^*$ is differentiable by the inverse function theorem.
Thus, differentiating $F^*$ yields 
$$\nabla_y F^*(y) = (\nabla_y {x^*})^\top y + x^* - (\nabla_y {x^*})^\top \nabla F(x^*) = x^*$$
since $\nabla F(x^*)=y$. 
This means if $y=f(x) = \nabla_x F(x)$, then $x = \nabla_y F^*(y)$; i.e. $\nabla F^* = (\nabla F)^{-1}$.

\section{Softplus Type Activation}
\label{app:softplus}
In this section, we let $r(x)=\max(0, x)$ be the ReLU activation function. 
\begin{definition}
We say a function $s$ is of the softplus type if the following holds
\begin{enumerate}[label=(\alph*)]
\item $s\geq r$
\item $s$ is convex
\item $|s(x)-r(x)|\rightarrow 0$ as $|x|\rightarrow \infty$
\end{enumerate}
\end{definition}

Note that a softplus-type activation function is necessarily continuous, non-decreasing, and uniformly approximating ReLU in the following sense:
$$|s(xa)/a - r(x)|\rightarrow 0$$
uniformly for all $x\in\R$ as $a\rightarrow\infty$.

The following proposition characterizes a big family of softplus-type functions, and establishes a close connection between softplus type functions and probability distribution functions. 

\begin{proposition}
Let $p$ be a probability density function of a random variable with mean zero. 
Then the convolution $s:=p*r$ is a softplus-type function. 
Moreover, $s(x) = \int_{-\infty}^x F_p(y) dy$, where $F_p$ is the distribution function of $p$, and $s$ is at least twice differentiable. 
\end{proposition}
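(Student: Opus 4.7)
The plan is to first get an explicit formula for $s=p*r$, differentiate it to obtain the announced identity $s(x)=\int_{-\infty}^x F_p(y)\,dy$, and then verify the three softplus-type conditions one by one, with the mean-zero hypothesis playing its role only in condition (c).

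First, since $r(x-y)=\max(0,x-y)$ vanishes for $y\geq x$, the convolution collapses to
\[
s(x) \;=\; \int_{-\infty}^{\infty} p(y)\, r(x-y)\, dy \;=\; \int_{-\infty}^{x} p(y)(x-y)\,dy.
\]
Differentiating under the integral sign (the boundary term vanishes because the integrand is continuous and equals $0$ at $y=x$) yields $s'(x)=\int_{-\infty}^x p(y)\,dy = F_p(x)$. Since $s(x)\to 0$ as $x\to-\infty$ (which I will justify as part of (c)), integrating $s'=F_p$ from $-\infty$ gives $s(x)=\int_{-\infty}^x F_p(y)\,dy$. Twice differentiability then follows: $s'=F_p$ is absolutely continuous with derivative $p$ wherever $p$ is defined, so $s''=p$ in the appropriate sense (and classically wherever $p$ is continuous, which is enough for the paper's use of softplus-type activations).

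For condition (a), let $Y\sim p$; writing $s(x)=\mathbb{E}[r(x-Y)]$ and applying Jensen's inequality to the convex function $r$ gives $s(x)\geq r(x-\mathbb{E}[Y])=r(x)$, where the mean-zero hypothesis is used. For condition (b), convexity of $s$ follows by a direct check: for $\lambda\in[0,1]$,
\[
s(\lambda x_1+(1-\lambda)x_2) \;=\; \int p(y)\,r\bigl(\lambda(x_1-y)+(1-\lambda)(x_2-y)\bigr)\,dy \;\leq\; \lambda s(x_1)+(1-\lambda)s(x_2),
\]
using convexity of $r$ and non-negativity of $p$. (Alternatively, $s''=p\geq 0$.)

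The main technical step is condition (c). Using $s(x)=\int_{-\infty}^x p(y)(x-y)\,dy$ and $r(x)=x$ for $x>0$,
\[
s(x)-r(x) \;=\; -x\bigl(1-F_p(x)\bigr) \;-\; \int_{-\infty}^{x} y\,p(y)\,dy \;+\; \mathbb{E}[Y],
\]
and by the mean-zero hypothesis $\mathbb{E}[Y]=0$. As $x\to+\infty$, the first term is bounded by $\mathbb{E}[Y\mathbf{1}_{\{Y>x\}}]\to 0$ (using $x\mathbf{1}_{\{Y>x\}}\leq Y\mathbf{1}_{\{Y>x\}}$ and dominated convergence, which is valid because $\mathbb{E}|Y|<\infty$ is implicit in the mean-zero assumption), and the integral converges to $\mathbb{E}[Y]=0$. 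A symmetric argument handles $x\to-\infty$, where $r(x)=0$ and one bounds $|s(x)|$ by $|x|F_p(x)+\int_{-\infty}^x|y|p(y)\,dy$, each piece vanishing by the same tail/integrability argument. The expected main obstacle is getting the tail estimate $x(1-F_p(x))\to 0$ cleanly from only the finite-mean assumption; aside from that, the rest is routine.
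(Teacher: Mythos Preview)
Your proof is correct and follows essentially the same route as the paper: Jensen for (a), monotonicity of $s'=F_p$ (equivalently $s''=p\geq 0$) for (b), and the tail estimates $xF_p(x)\to 0$ and $x(1-F_p(x))\to 0$ via dominated convergence for (c); the only cosmetic difference is that you obtain $s'=F_p$ by differentiating the convolution, whereas the paper establishes $s(x)=\int_{-\infty}^x F_p$ by computing both sides and matching them via integration by parts. One small slip in your preamble: the mean-zero hypothesis is not used \emph{only} in (c)---you (correctly) invoke it in (a) as well, exactly as the paper does.
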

\begin{proof}
We first prove a claim (i): $xF_p(x)\rightarrow0$ as $x\rightarrow-\infty$.
First, for $x\leq 0$, 
$$0\geq xF_p(x) = \int_{-\infty}^x xp(y)dy \geq \int_{-\infty}^x yp(y)dy  $$
Since $1_{y\leq x}yp(y)\rightarrow 0$ as $x\rightarrow-\infty$ and $|1_{y\leq x} y p(y)| \leq |y p(y)|$, which is integrable by assumption, the integral on the RHS of the above goes to $0$ by the dominated convergence theorem. 

We now show the identity.
By definition, since $\int y p(y)dy=0$
\begin{align}
s(x) &= \int \max(x-y, 0) p(y) dy = \int \max(x,y) p(y) dy \nonumber\\
&= \int_{-\infty}^x xp(y)dy + \int_{x}^\infty y p(y) dy
= x F_p(x) + \int_{x}^\infty y p(y) dy
\label{eq:softplus_first}
\end{align}

where we've used claim (i) to evaluate $xF_p(x)$ as $x\rightarrow-\infty$.
On the other hand, integration by part implies
\begin{align}
\int_{-\infty}^x F_p(y)dy
= y F_p(y) \big|_{-\infty}^x - \int_{-\infty}^x yp(y)dy
= x F_p(x) + \int_x^\infty yp(y)dy    
\label{eq:softplus_second}
\end{align}

Twice differentiability follows from the differentiability of $F_p$. 

(a) Now since $r(x)$ is convex, Jensen's inequality gives
$$s(x) =\int r(y) p(x-y) dy 
= \E[r(y)] \geq r(\E[y]) = r(x)
$$

(b) $s$ is convex because $s'=F_p$ is non-decreasing. 

(c) To show that $s$ ane $r$ are asymptotically the same, we notice the integral on the RHS of (\ref{eq:softplus_first}) goes to $0$ as $|x|\rightarrow\infty$ (by the dominated or monotone convergence theorem). 
It suffices to show $xF_p(x)$ goes to $0$ as $x\rightarrow-\infty$, which is just claim (i), and $x-xF_p(x)>0$ goes to $0$ as $x\rightarrow\infty$.
To show the latter, we can rewrite $x-xF_p(x) = x(1-F_p(x)) = \int_x^\infty x p(y) dy$, and the same argument as the claim holds with a vanishing upper bound.

\end{proof}

When $p$ is taken to be the standard logistic density, the corresponding $s=p*r$ is simply the regular softplus activation function.
We list a few other softplus-type functions in \cref{tab:softplus} and visualize them in \cref{fig:softplus}. 
We experimented with the Gaussian-softplus and the logistic-softplus.

\begin{minipage}{0.65\textwidth}
\begin{table}[H]
    \vspace{0.5cm}
    \centering
    \begin{tabular}{ccc}
        \toprule
        & $p(y)$ & $s:=p*r$\\
        \midrule
        Logistic & $ \frac{\exp(-x)}{(1+\exp(-x))^2} $ & $\log \left(1+\exp(x)\right)$ \\
        Laplace & $\frac{e^{-|x|}}{2}$ & $r(x) + \frac{e^{-|x|}}{2}$\\
        Gaussian & $\frac{e^{-\frac{x^2}{2}}}{\sqrt{2\pi}}$ & $\frac{{\sqrt{\frac{\pi}{2}}x \erf\left(\frac{x}{\sqrt{2}}\right) + e^{-\frac{x^2}{2}}+\sqrt{\frac{\pi}{2}}x}}{\sqrt{2\pi}}$ \\
        \bottomrule
    \end{tabular}
    \vspace{0.3cm}
    \caption{Formula of some softplus-type functions.}
    \label{tab:softplus}
\end{table}
\end{minipage}
\begin{minipage}{0.35\textwidth}
\begin{figure}[H]
    \centering
    \includegraphics[width=0.95\textwidth]{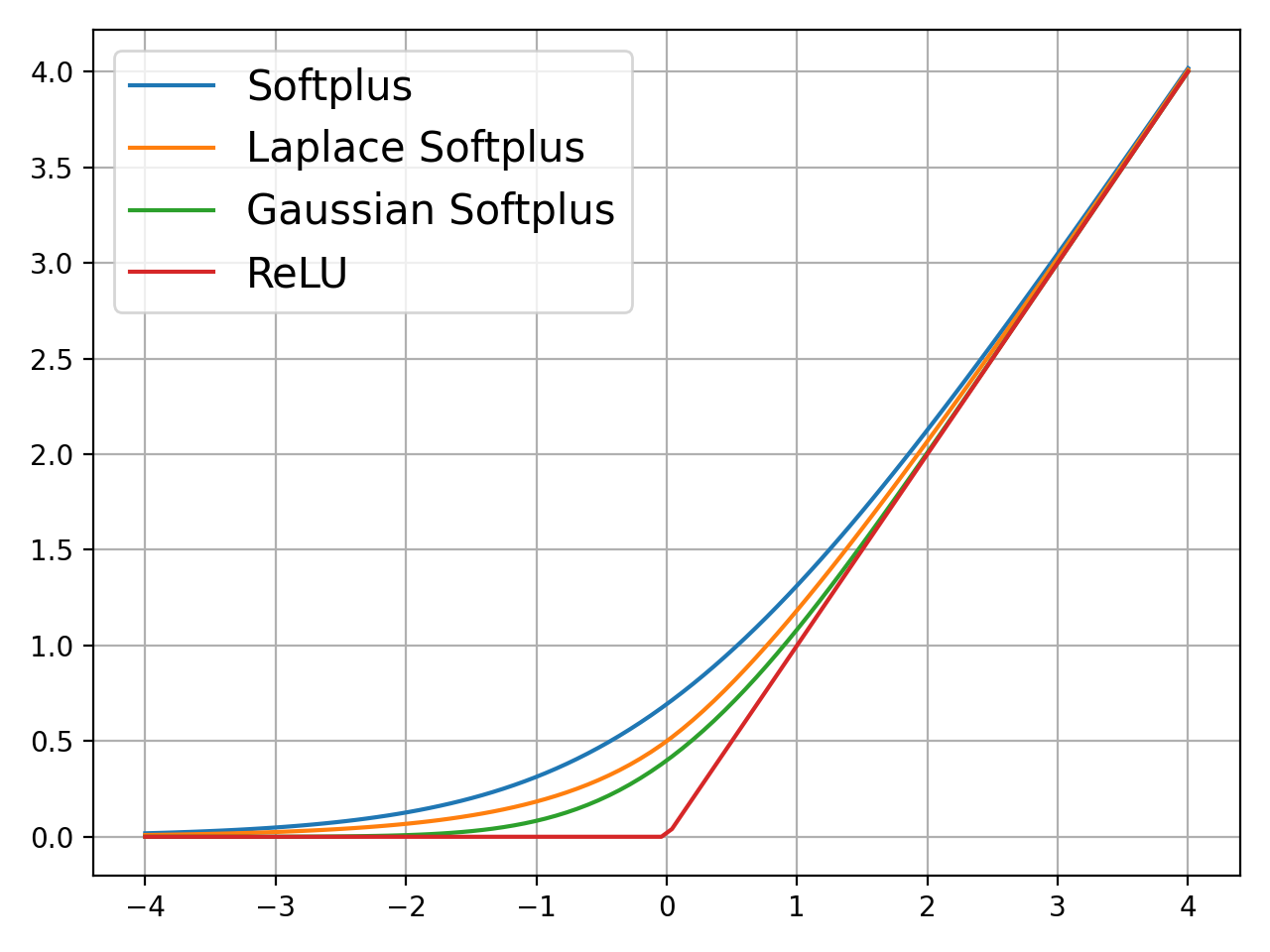}
    \caption{Softplus-type functions.}
    \label{fig:softplus}
\end{figure}
\end{minipage}
\newpage
\section{Universality Proof}
\label{app:univ}
\paragraph{Notation:} Given a convex set $\Omega\subseteq\R^d$,
we let $\gC(\Omega)$ denote the set of continuous functions on $\Omega$, 
and $\gC_{\times}(\Omega):=\{f\in \gC(\Omega): f \text{ is convex}\}$ denote the set of convex, continuous functions.

We first show that ICNNs with a suitable activation function are dense in $C_\times$. 
A similar result can be found in \citet{chen2019optimal}, where they use a different constructive proof: first show that piecewise maximum of affine functions, \ie{} the maxout unit \citep{goodfellow2013maxout}, can approximate any convex function, and then represent maxout using ICNN. 
We emphasize our construction is simpler (see proof of Proposition~\ref{prop:icnn_density_convex}).

The following proposition proves that functions that are pointwise maximum of affine functions, are a dense subset of $C_\times$.

\begin{proposition}
Pointwise maximum of affine functions is dense in $\gC_{\times}([0,1]^d)$.
\label{prop:maxout}
\end{proposition}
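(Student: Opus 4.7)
The plan is to show that every $F \in \gC_\times([0,1]^d)$ is the pointwise supremum of its affine minorants, and then to use compactness of $[0,1]^d$ to replace that supremum with a finite maximum at the cost of an arbitrarily small uniform error. This turns the proposition into a two-step argument: a convex-analytic representation result followed by an open-cover argument.

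First I would extend $F$ to $\tilde F : \R^d \to \R \cup \{+\infty\}$ by setting $\tilde F \equiv +\infty$ off $[0,1]^d$. Since $F$ is continuous on the closed convex set $[0,1]^d$, $\tilde F$ is proper, convex, and lower semicontinuous, so the Fenchel--Moreau theorem applies and yields $\tilde F = \tilde F^{**}$, which restricted to $[0,1]^d$ unpacks to
$$F(x) = \sup\{\ell(x) : \ell \text{ affine on } \R^d,\ \ell \leq F \text{ on } [0,1]^d\} \quad \text{for every } x \in [0,1]^d.$$
Call this set of affine minorants $A(F)$.

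Second I would fix $\epsilon > 0$ and, for each $x \in [0,1]^d$, pick $\ell_x \in A(F)$ with $\ell_x(x) > F(x) - \epsilon$, which the representation above makes possible. Continuity of $\ell_x - F$ turns $U_x := \{y \in [0,1]^d : \ell_x(y) > F(y) - \epsilon\}$ into an open neighborhood of $x$, and compactness of $[0,1]^d$ supplies a finite subcover $U_{x_1}, \ldots, U_{x_N}$. Setting $M := \max_{1 \leq i \leq N} \ell_{x_i}$ yields a pointwise maximum of finitely many affine functions satisfying $F - \epsilon < M \leq F$ on all of $[0,1]^d$; letting $\epsilon \to 0$ produces the desired uniformly convergent sequence.

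The main obstacle, in my view, is the Fenchel--Moreau step, specifically its validity at points of $\partial [0,1]^d$ where the classical subdifferential of $F$ may be empty, or nonempty but unbounded in norm. A tempting alternative --- picking subgradient-based tangent affine minorants on a fine $\delta$-net and closing the gap with uniform continuity --- would demand a uniform bound on the norms of the chosen subgradients, and that bound can blow up near the boundary (consider $F(x) = -\sqrt{1-x^2}$ on $[-1,1]$, whose derivative diverges at $\pm 1$). The Fenchel--Moreau / open-cover route sidesteps this entirely: at every $x$, including boundary points, an affine minorant of $F$ whose value at $x$ is within $\epsilon$ of $F(x)$ is guaranteed to exist, and compactness then collapses the pointwise supremum to a finite max in a single step.
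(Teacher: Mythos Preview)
Your proof is correct. The Fenchel--Moreau step applies exactly as you describe (the $+\infty$ extension of a continuous convex function on a closed convex set is proper, convex, and lsc), and the open-cover argument cleanly converts the pointwise supremum of affine minorants into a finite maximum uniformly $\epsilon$-close from below.

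Your route differs genuinely from the paper's. The paper takes precisely the approach you flag as delicate: it picks a finite $\delta$-net $\gX$ of \emph{interior} grid points, uses the subgradient-based tangent hyperplanes $L_y(x)=\nabla f(y)^\top(x-y)+f(y)$ at those points, sets $f_\epsilon=\max_{y\in\gX}L_y$, and then bounds $f-f_\epsilon$ by controlling $|\nabla f(y)_i|$ via the uniform-continuity modulus of $f$. The boundary blow-up you worry about is avoided because every grid point $y$ has $y\pm 2^{-n}e_i\in[0,1]^d$, so the difference-quotient bound $|\nabla f(y)_i|\le \epsilon/2^{-n}$ is available from uniform continuity; combined with $|x_i-y_{|x,i}|\le 2^{-n}$ this gives the $(d{+}1)\epsilon$ error. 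What the paper's argument buys is an explicit, constructive approximant with a quantitative error in terms of the modulus of continuity; what your argument buys is brevity and robustness --- you never need to touch subgradients or worry about where they exist or how large they are, and you get uniform approximation directly from compactness. Both are standard, but yours is the cleaner proof.
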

\begin{proof}
Fix some $\epsilon >0$. 
Since $f\in \gC_{\times}([0,1]^d)$ is uniformly continuous on $[0,1]^d$, there exists some $\delta>0$ such that $|f(x)-f(y)|<\epsilon$ provided that $||x-y||<\delta$. 
Let $n$ be big enough such that $2^{-n}<\delta$, and let 
$\gX$ be the set of points whose coordinates sit on $i2^{-n}$ for some $1\leq i\leq 2^n-1$ (i.e. there are $|\gX|=(2^n-1)^d$ points in $\gX$).
For each $y\in \gX$, let $L_y(x) := \nabla f(y)^\top(x-y) + f(y)$ be a supporting hyperplane of the graph of $f$, where $\nabla f(y)$ is a subgradient of $f$ evaluated at $y$. 
Then we have a convex approximation $f_\epsilon(x):=\max_{y\in\gX} L_y(x)$ which bounds $f$ from below.
Moreover, letting $y_{|x} := \argmin_{y\in\gX} ||x-y||$, we have (for $x\not\in \gX$),

\begin{align*}
f(x) - f_\epsilon(x)
&= f(x) - \max_{y\in \gX} L_y(x) \\
&\leq f(x) - L_{y_{|x}}(x) \\
&= f(x) - f(y_{|x}) - \sum_{i=1}^d \nabla f(y_{|x})_i ( x_i - y_{|x, i}) \\
&\leq f(x) - f(y_{|x}) + \sum_{i=1}^d |\nabla f(y_{|x})_i | \cdot | x_i - y_{|x, i}| \\
&\leq f(x) - f(y_{|x}) + \sum_{i=1}^d \frac{\epsilon}{| x_i - y_{|x, i}|} \cdot | x_i - y_{|x, i}| \\
&\leq (d+1)\epsilon
\end{align*}

Since $\epsilon$ is arbitrary, this construction forms a sequence of approximations converging uniformly to $f$ from below. 
\end{proof}

The following proposition shows that maxout units can be equivalently represented by ICNN with the ReLU activation, and thus entails the density of the latter (as well as ICNN with softplus activation). 

\begin{proposition}
\label{prop:icnn_density_convex}
ICNN with ReLU or softplus-type activation is dense in  $\gC_{\times}([0,1]^d)$.
\end{proposition}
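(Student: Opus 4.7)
The overall strategy is to combine Proposition~\ref{prop:maxout} with an explicit ICNN construction that realizes pointwise maxima of affine functions. Since such maxima are already dense in $\gC_{\times}([0,1]^d)$, it suffices to show that every function of the form $M(x) = \max_{i=1}^{n}(a_i^\top x + b_i)$ can be represented exactly (for ReLU) or approximated arbitrarily well (for a softplus-type $s$) by an ICNN.

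For the ReLU case, I would build a depth-$(n{-}1)$ ICNN based on the telescoping identity $\max(M_k,\, a_{k+1}^\top x + b_{k+1}) = (a_{k+1}^\top x + b_{k+1}) + \mathrm{ReLU}(M_k - a_{k+1}^\top x - b_{k+1})$. The crucial trick for respecting the positive-weight constraint is to store at the $k$-th hidden layer the non-negative quantity $h_k := M_{k+1} - (a_{k+1}^\top x + b_{k+1}) \ge 0$ rather than $M_{k+1}$ itself. A routine induction then yields the recursion $h_k = \mathrm{ReLU}\bigl(h_{k-1} + (a_k - a_{k+1})^\top x + (b_k - b_{k+1})\bigr)$, so each successor layer receives $h_{k-1}$ with coefficient $+1$ (compatible with $L_k^+$) while every subtraction is routed through the unconstrained skip connection $L_k(x)$. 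After $n-1$ such layers the readout $L_n^+(h_{n-1}) + L_n(x) = h_{n-1} + a_n^\top x + b_n$ equals $M(x)$ exactly.

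For the softplus-type case, I would invoke the uniform property $|s(\beta z)/\beta - \mathrm{ReLU}(z)| \to 0$ from Appendix~\ref{app:softplus} and replace each ReLU in the above construction by the scaled activation $s_\beta(z) := s(\beta z)/\beta$, which is still convex and non-decreasing. The inner factor $\beta$ gets absorbed into the preceding (sign-unconstrained) input map, and the outer factor $1/\beta$ into the succeeding positive weight; both absorptions preserve the positivity constraint. Because $[0,1]^d$ is compact, the preactivations remain uniformly bounded, and each layer is Lipschitz in its hidden input, so uniform convergence at the activation level lifts to uniform convergence of the entire network to the ReLU construction. Combined with Proposition~\ref{prop:maxout}, this delivers density of ICNN with softplus-type activation in $\gC_{\times}([0,1]^d)$.

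The main obstacle is the construction itself: the obvious recursion $M_{k+1} = M_k + \mathrm{ReLU}(a_{k+1} - M_k)$ fails because it forces a $-M_k$ inside the activation, i.e.\ a negative weight on a hidden unit. The fix is the slightly counterintuitive choice to carry $h_k = M_{k+1} - a_{k+1}$ instead of $M_k$; this keeps the stored value in the range of ReLU and confines all sign-violating subtractions to affine functions of $x$, which can then be absorbed into the unconstrained skip connections $L_k(x)$.
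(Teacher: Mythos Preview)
Your proposal is correct and follows essentially the same route as the paper: both iterate the identity $\max(a,b)=b+r(a-b)$ to realize $\max(L_1,\dots,L_n)$ as a width-one ICNN, and your post-activations $h_k=M_{k+1}-L_{k+1}$ coincide exactly with the paper's $r(z_k)$ (where $z_j=\max(L_1,\dots,L_j)-L_{j+1}$). You are more explicit than the paper about why the recursion respects the positive-weight constraint and about propagating the softplus approximation through the layers, but the underlying construction is identical.
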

\begin{proof}
Let $r(x) = \max(0, x)$ be the ReLU activation funciton.
Any convex piecewise linear function $f(x)$ can be represented by
$f(x)=\max(L_1, ... , L_k)$ where $L_j=a_j^\top x + b_j$, which can then be reduced to
\begin{align*}
f(x)
&= r(\max( L_1 - L_k, ..., L_{k-1} - L_k ) ) + L_k \\
&= r(r(\max(L_1-L_{k-1}, ... , L_{k-2} - L_{k-1})) + L_{k-1} - L_k) + L_k \\
&= z_k
\end{align*}
where $z_j := r(z_{j-1}) + L_j'$
for $2\leq j\leq k$, $z_1 = L_1-L_2$,
$L_j' := L_j - L_{j+1}$
for $2\leq j\leq k-1$, and
$L_k':= L_k$.

Since by Proposition~\ref{prop:maxout}, pointwise maximum of affine functions is dense in $\gC_{\times}$, so is ICNN with the ReLU activation function. 
The same holds for softplus since softplus can be used to uniformly approximate ReLU. 

\end{proof}

\diffcvx*
\begin{proof}
We let $x$ be a differentiable point of $G$. 
Since convergence of derivatives wrt each coordinate can be dealt with independently, we assume $d=1$ without loss of generality. 
We can write $f_n$ as 
$$f_n(x) = \lim_{m} f_{nm}(x) \quad\text{ where }\quad f_{nm}(x) = \frac{F_n(x-1/m) - F_n(x)}{-1/m}$$

The problem can be rephrased as proving \footnote{Note that there is an implicit dependency on $x$ since the result is pointwise.}
\begin{align}
\lim_n \lim_m f_{nm} = \lim_m \lim_n f_{nm} 
\label{eq:rephrase_limit_swap}
\end{align}

Note that $f_{nm}$ is non-decreasing in $m$ since $F_n$ is convex, and thus $f_{nm}\leq f_n$.
Since $f_{nm}$ converges to $f_n$, for any $\epsilon>0$, we can find an integer $\mu(\epsilon, n)$ such that for all $m\geq \mu(\epsilon, n)$, 
$|f_{nm} - f_n|\leq\epsilon$.
Let $m_{k}$ be a subsequence of $\{m\geq1\}$ defined as $m_{k}=\mu(2^{-k}, n)$. 


Then $|f_{nm_{k+1}} - f_{nm_k}| \leq 2^{-k}$, which is integrable wrt the counting measure on positive integers $k$, since
$$\int 2^{-k} = \lim_{K\rightarrow\infty}\sum_{k=1}^K 2^{-k} = 1$$

Thus, letting $f_{nm_0}=0$, by the Dominated Convergence Theorem, we have

$$\lim_n \lim_K f_{nm_K} =  \lim_n \int f_{nm_{k}} - f_{nm_{k-1}} = \int \lim_n f_{nm_{k}} - f_{nm_{k-1}} = \lim_K \lim_n f_{nm_K} $$

Although we are only looking at the limit of the subsequence $m_k$, this is sufficient for (\ref{eq:rephrase_limit_swap}), since  the LHS is equal to $\lim_n f_n$ (since each $F_n$ is differentiable), and
by linearity of the limit, the RHS is equal to 
$\lim_K \frac{G(x - 1/m_K) - G(x)}{ -1/m_K} = g(x)$ (since $G$ is differentiable at $x$ by assumption). 

Since the set of points over which $G$ is not differentiable is a set of measure zero \citep[Thm.~25.5]{rockafellar1970convex}, the convergence holds almost everywhere. 
\end{proof}


\univ*
\begin{proof}
Assume $\mu$ and $\nu$ have finite second moments.
Since $\mu$ is absolutely continuous, 
by \textbf{Brenier's theorem}, there exists a convex function $G:\R^d\rightarrow\R$ such that $\nabla G(X)\overset{d}=\nu$ (where the gradient is unique up to changes on a null set).
By Proposition~\ref{prop:icnn_density_convex}, there exists a sequence of ICNN $F_n$ converging to $G$ pointwise everywhere. 
Such a sequence can be found since we can let $F_n$ approximate $G$ with a uniform error of $1/n$ on a compact domain $[-n,n]^d$. 
Theorem~\ref{thm:diff_convex_converge} then implies the gradient map $f_n:=\nabla F_n$ converges to $\nabla G$ pointwise almost everywhere. 
This implies the weak convergence of the pushforward measure of $f_n\circ X$. 

Now remove the finite second moment assumption 
and let $X$ and $Y$ be random variables distributed according to $\mu$ (with Lebesgue density $p$) and $\nu$, respectively. 
Denote by $B_k$ a ball of radius $k>0$ centered at the origin, i.e. $B_k:=\{x:||x||\leq k\}$. 
Let $X_k=X1_{X\in B_k} + U_k1_{X\not\in B_k}$, where $U_k$ is an independent random variable distributed uniformly on $B_k$, and let $\mu_k$ be the law of $X_k$. 
Then $X_k\rightarrow X$ almost surely as $k\rightarrow\infty$, and $\mu_k$ is still absolutely continuous wrt the Lebesgue measure, with its density being $p(x) + \frac{1}{\vol(B_k)}\mu(||X||> k)$ if $||x||\leq k$ or $0$ otherwise. 
Let $Y_k$ and $\nu_k$ be defined similarly (while $\nu_k$ may not be absolutely continuous wrt Lebesgue). 
From above, since $X_k$ and $Y_k$ are bounded and admit a finite second moment, we know fixing $k$, we can find a sequence of $f_{k,n} = \nabla F_{k,n}$ such that $f_{k,n}\circ X_k \rightarrow Y_k$ in distribution as $n\rightarrow \infty$.
Now since weak convergence is metrizable, choose $n_k$ to be large enough such that the distance between the pushforward of $f_{k,n_k} \circ X_k$ and $\nu_k$ is at most $1/k$.
An application of triangle inequality of the weak metric implies $f_{k,n_k}\circ X_k\rightarrow Y$ in distribution as $k\rightarrow\infty$. 
Finally, note that since $\{||f_{k,n_k} \circ X_k - f_{k,n_k} \circ X || > \epsilon\} \subseteq \{||X||>k\}$, by monotonicity, we have
\begin{align}
\sP\left(\limsup_k ||f_{k,n_k} \circ X_k - f_{k,n_k} \circ X || > \epsilon \right) \leq \sP\left(\limsup_k ||X|| > k\right) = 0
\end{align}
and thus $||f_{k,n_k} \circ X_k - f_{k,n_k} \circ X ||\rightarrow 0$ almost surely (where $\sP$ is the underlying probability measure of the measure space). 
By Lemma 1 of \citet{huang2020solving} (or equivalently Lemma 2 of \citet{huang2020augmented}), $f_{k,n_k} \circ X$ converges in distribution to $Y$.

\end{proof}

\section{Optimality Proof}
\label{app:optim}
\optim*
The result can be deduced from the fact that optimality is ``stable'' under weak limit; see for example, \citet[Thm 1.50]{santambrogio2015optimal}.  
We prove the special case of quadratic cost function. 
\begin{proof}
We claim that if $F$ is a convex potential such that $Z=\nabla F(X)$ has $\nu$ as its law, then $\nabla F \equiv \nabla G$ almost surely. 
The proof of the claim is originally due to \citet{ruschendorf1990characterization}, but we present it here for completeness. 
Let $Z'$ be another random variable distributed by $\nu$. 
Then by the Fenchel-Young inequality (applied to the convex potential $F$),
$$\E[X^\top Z'] 
\leq \E[F(X) + F^*(Z')] 
= \E[F(X) + F^*(Z)] 
= \E[F(X) + F^*(\nabla F(X))] 
= \E[X^\top \nabla F(X)]$$
This concludes the proof since $\nabla G$ uniquely solves the transportation problem, which is equivalent to finding a transport map $\tilde{g}$ that maximizes the covariance:
$$\E[||X-\tilde{g}(X)||^2]
= \E[||X||^2+||\tilde{g}(X)||^2] - 2 \E[X^\top \tilde{g}(X)] 
$$
Let $F_\infty$ to be the pointwise limit of $F_n$.
Then for any $x_1$, $x_2$ and $t\in[0,1]$,
\begin{align*}
F_\infty(tx_1+(1-t)x_2) 
&= \lim_{n\rightarrow\infty} F_n(tx_1+(1-t)x_2) \\
&\leq \lim_{n\rightarrow\infty} tF_n(x_1) + (1-t)F_n(x_2) \\
&=  \lim_{n\rightarrow\infty} tF_n(x_1) +  \lim_{n\rightarrow\infty} (1-t)F_n(x_2)
= tF_\infty(x_1) + (1-t)F_\infty(x_2)
\end{align*}
That is, $F_\infty$ is convex. 
Now since $F_n$ is a convergent sequence of convex functions, its gradient $\nabla F_n$ also converges pointwise almost everywhere to $\nabla F_\infty$ by Theorem \ref{thm:diff_convex_converge}. 
Let $\rho$ denote the Prokhorov metric, which metrizes the weak convergence, and by abuse of notation, we write $\rho(X,Y)$ to denote the distance between the law of $X$ and $Y$.
Then
$$\rho(\nabla F_\infty(X), Y) \leq
\rho(\nabla F_\infty(X), \nabla F_n(X)) + \rho(\nabla F_n(X), Y)
$$
which means $\nabla F_\infty(X)$ and $Y$ have the same law, $\nu$.
Then by the claim, $\nabla F_\infty \equiv \nabla G$, and thus $\nabla F_n\rightarrow\nabla G$ a.s. as $n\rightarrow\infty$.

\end{proof}

\newpage
\section{Experimental Details}
\label{app:exp}

\subsection{Architecture details}
\paragraph{Initialization} As ICNNs have positive weights, its initialization has a different dynamics than a standard feed-forward network. 
If not stated otherwise, all parameters are initialized using standard PyTorch modules \citep{paszke2019pytorch}.
To parameterize positive weights, we modify the weight parameters of a standard linear layer with the softplus activation. 
We then divide all the weights by the total number of incoming units, so that the average magnitude of each hidden unit will not grow as the dimensionality of the previous layer increases. 

In addition, we reparameterize the CP-Flow $F_\alpha$ as $F_{w_0,w_1}$ defined as 
$$F_{w_0,w_1} = s(w_0){||x||^2}/{2} + s(w_1) F(x) $$ 
where $s$ is the regular (logistic) softplus. 
$w_0$ is initialized to be $s^{-1}(1)$, and $w_1$ is initialized to be $0$ so $F_{w_0,w_1}$ is closer to the identity map. 

Finally, we insert the ActNorm layer \citep{kingma2018glow} everywhere before an activation function is applied with the data-dependent initialization. 

\paragraph{Activation function}
We use the following convexity-preserving operators when designing an ICNN:
\begin{enumerate}
\item invariance under affine maps: $g\circ f$ is convex if $f$ is linear and $g$ is convex
\item non-negative weighted sums: $\sum_j w_j f_j$ is convex if $w_j$s are non-negative and $f_j$s are convex
\item composing with non-decreasing convex functions: $g\circ f$ is convex if $f$ and $g$ are both convex and $g$ is non-decreasing
\end{enumerate}

Notably, in 1., we do not require $g$ to be non-decreasing. 
We experiment with a symmetrized version of softplus $g(x) = s(x) - 0.5 x$ where $s$ is a softplus-type activation, whenever $g$ is used as the first activation.
This way, the derivative of $g$ is $s'(x) - 0.5$, which ranges between $\pm0.5$ and behaves more like tanh (than sigmoid). 
This can be used for the first hidden layer of a regular ICNN, or the augmented layer of the input-augmented ICNN. 

We also experiment with an offset version of softplus, which is defined as $g(x) = s(x) - s(0)$. 
This way the output of the softplus is more symmetric since it can be negative.

\subsection{Toy examples}
For toy examples, we compute the log-determinant of the Jacobian in a bruteforce manner. 
We use the Adam optimizer with an initial learning rate of 0.005.
We create a data set following the toy distribution of size 50000, and train each model for 50 epochs with a minibatch size of 128. 
For MAF and NAF we cap the gradient norm to be 10 for stability. 
For CP-Flow, we use the Gaussian softplus as activation, and symmetrize it at the first layers. 
\begin{table}[H]
    \centering
    \begin{tabular}{ccccc}
        \toprule
        Data & \textsc{N. flows} & \textsc{N. hidden layers} & \textsc{N. hidden units} \\
        \midrule
        One moon & 5 & 3 & 32 \\
        Eight Gaussians & 5 & 3 & 32   \\
        Rings & 5 & 5 & 256  \\
        \bottomrule
    \end{tabular}
    \caption{Architectural details for toy density estimation.}
    \label{tab:arch_toy}
\end{table}

\subsection{Approximating Optimal Coupling}
\label{app:approx_optim}
For the OT map approximation experiment, we simulate data (of size $50,000$) from a Gaussian distribution $\gN(\mu,\Sigma)$ with a prior $\mu\sim \gN(0,I)$ and $\Sigma\sim \gW(I, d+1)$, where $\gW$ is the Wishart distribution. 
For CP-Flow, we use a network of 5 hidden layers of $64$ hidden units, with the Gaussian-softplus and zero-offset. 
We use the Adam optimizer with a minibatch size of $128$, trained for two epochs to generate the figures.

\subsection{Density Estimation}
\label{app:tabular}

\begin{table}[H]
    \centering
    \begin{tabular}{lccccc}
        \toprule
        Model & \textbf{\textsc{Power}} & \textbf{\textsc{Gas}} & \textbf{\textsc{Hepmass}} & \textbf{\textsc{Miniboone}} & \textbf{\textsc{BSDS300}} \\
        \midrule
        Real NVP & -0.17$\,\pm\,$0.01 & -8.33$\,\pm\,$0.14 & 18.71$\,\pm\,$0.02 & 13.55$\,\pm\,$0.49 & -153.28$\,\pm\,$1.78 \\
        Glow & -0.17$\,\pm\,$0.01 & -8.15$\,\pm\,$0.40 & 18.92$\,\pm\,$0.08 & 11.35$\,\pm\,$0.07 & -155.07$\,\pm\,$0.03 \\
        FFJORD & -0.46$\,\pm\,$0.01 & -8.59$\,\pm\,$0.12 & 14.92$\,\pm\,$0.08 & 10.43$\,\pm\,$0.04 & -157.40$\,\pm\,$0.19 \\
        MADE & 3.08$\,\pm\,$0.03 & -3.56$\,\pm\,$0.04 & 20.98$\,\pm\,$0.02 & 15.59$\,\pm\,$0.50 & -148.85$\,\pm\,$0.28 \\
        MAF & -0.24$\,\pm\,$0.01 & -10.08$\,\pm\,$0.02 & 17.70$\,\pm\,$0.02 & 11.75$\,\pm\,$0.44 & -155.69$\,\pm\,$0.28 \\
        TAN & -0.48$\,\pm\,$0.01 & -11.19$\,\pm\,$0.02 & 15.12$\,\pm\,$0.02 & 11.01$\,\pm\,$0.48 & -157.03$\,\pm\,$0.07 \\
        NAF & -0.62$\,\pm\,$0.01 & -11.96$\,\pm\,$0.33 & 15.09$\,\pm\,$0.40 & 8.86$\,\pm\,$0.15 & -157.73$\,\pm\,$0.04 \\
        CP-Flow & -0.52$\,\pm\,$0.01 & -10.36$\,\pm\,$0.03 & 16.93$\,\pm\,$0.08 & 10.58$\,\pm\,$0.07 & -154.99$\,\pm\,$0.08 \\
        \bottomrule
    \end{tabular}
    \caption{Test negative log-likelihood (in nats) of tabular datasets in~\citet{papamakarios2017masked} for density estimation models (lower is better). Results for compared models are taken from~\citet{grathwohl2018ffjord}. Average and standard deviation report over 3 random seeds at the best hyperparameters found by grid search.
    For CP-Flow trained on \textsc{Gas} and \textsc{BSDS300}, only one seed converged at the time of submission, so we report N/A for the standard deviation. 
    }
    \label{app:tab_results}
\end{table}

\begin{table}[H]
    \centering
    \begin{tabular}{lcccc}
        \toprule
        Dataset & \textsc{N. flows} & \textsc{N. hidden layers} & \textsc{N. hidden units} & \textsc{N. Parameters} \\
        \midrule
        \textbf{\textsc{Power}} & 10 & 5 & 512 & 5,463,272 \\
        \textbf{\textsc{Gas}} & 5 & 5 & 512 & 2,757,276  \\
        \textbf{\textsc{Hepmass}} & 5 & 5 & 512 & 2,923,897  \\
        \textbf{\textsc{Miniboone}} & 2 & 5 & 256 & 379,232 \\
        \textbf{\textsc{BSDS300}} & 10 & 5 & 256 & 2,152,456 \\
        \bottomrule
    \end{tabular}
    \caption{Best hyperparameters found in our search and the consequent total number of model parameters.}
    \label{app:tab_hypers}
\end{table}

For each dataset, we search via grid search
for the best hyperparameter configuration by calculating the negative log-likelihood in the validation set using an 
exact bruteforce computation of the log-determinant of the Jacobian. 
Initially, we focus our search at basic hyperparameters that influence 
the total number of parameters of the model, such as the number of flow
blocks, the number of hidden layers of each convex potential block, and the number of hidden units per layer of 
each block. After a first selection of candidate hyperparameters after 
a constant number of training steps, we instantiate extra
experiments considering variations in the softplus-type activations
we use. We find that Gaussian-softplus and symmetrizing it in the 
activations of the first layer help to the performance,
oftentimes accelerating the training. In \cref{app:tab_hypers}, we
report the final hyperparameter combinations we have used for the
results presented in \cref{app:tab_results}.

\subsection{Generative Modeling with Convolutional ICNN}
\label{app:generative}

On MNIST and CIFAR-10, we used a multiscale architecture. For MNIST, we had 8 CP-Flow blocks, followed by an invertible downsampling~\citep{dinh2017density}, followed by 8 CP-Flow blocks, another downsampling, and final 8 CP-Flow blocks. Before every block was an ActNorm~\citep{kingma2018glow} layer. For CIFAR-10, we had 2 CP-Flow blocks, followed by an invertible downsampling~\citep{dinh2017density}, followed by 2 CP-Flow blocks, downsampling, 2 CP-Flow blocks, downsampling, and final 2 CP-Flow blocks. Before every CP-Flow block was an ActNorm~\citep{kingma2018glow} layer. All ICNN architectures had 4 hidden layers and 64 hidden units wide. We averaged across the final spatial dimensions to obtain a scalar output for the convex potential.

We test the invertibility of a CP-Flow model trained on CIFAR-10 on a set of out-of-distribution data sets constructed by \citet{behrmann2020understanding} in Table \ref{tab:recon_errs}.
Notably, we do not suffer from the exploding inverse problem and can reliably invert all data sets, either in or out of distribution.
Samples of reconstructed images are shown in Figure \ref{fig:reconstructions} which show no visual difference between the original images and their reconstructions.

\begin{figure}
    \centering
    \begin{subfigure}[b]{0.4\linewidth}
    \centering
    \includegraphics[width=\linewidth]{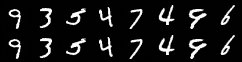}
    \end{subfigure}
    \begin{subfigure}[b]{0.4\linewidth}
    \centering
    \includegraphics[width=\linewidth]{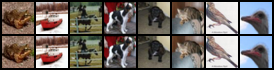}
    \end{subfigure}
    \caption{(\textit{top}) Data samples. (\textit{bottom}) Reconstruction after passing it through a CP-Flow.}
    \label{fig:reconstructions}
\end{figure}

\begin{table}
    \centering
    \begin{tabular}{lccc}
    \toprule
    \textbf{\textsc{Dataset}} & \textbf{\textsc{Glow}} & \textbf{\textsc{ResFlow}} & \textbf{\textsc{CP-Flow}} \\
    \cmidrule{1-4}
    CIFAR-10 (in-dist)                 & 1.12E-6 & 5.16E-4 & 1.96E-3  \\
    \cmidrule{1-4}
    Uniform                            & \texttt{Inf} & 3.04E-4 & 3.62E-3  \\
    Gaussian                           & \texttt{Inf} & 1.31E-4 & 5.97E-3  \\
    Rademacher                         & \texttt{Inf} & 3.43E-5 & 5.27E-3 \\
    SVHN~\citep{netzer2011reading}     & 9.94E-7 & 1.31E-3 & 2.53E-3  \\
    Texture~\citep{cimpoi14describing} & \texttt{Inf} & 3.66E-4 & 1.69E-3  \\
    Places~\citep{zhou2017places}      & \texttt{Inf} & 5.31E-4 & 1.76E-3  \\
    tinyImageNet                       & \texttt{Inf} & 6.26E-4 & 1.65E-3  \\
    \bottomrule
    \end{tabular}
    \caption{Reconstruction RMSE. Results for Glow and ResFlow are taken from \citet{behrmann2020understanding}.}
    \label{tab:recon_errs}
\end{table}

\subsection{Amortizing ICNN for Variational Infernece}
\label{app:vae}
We use the partially input convex neural network from \citet{amos2017input} with multiplicative conditioning. 
There are some other options for conditioning, such as with the hypernetwork \citep{ha2016hypernetworks} or feature-wise transformation \citep{dumoulin2018feature}. 
We remove the non-linear path of the conditioned variable beyong the first layer, since we found that it hurts training of the VAE. 
We use the Gaussian-softplus for all the experiments, symmetrizing it on the first layers. 
For the Caltech experiment, we also use the offset version, and the softplus is initialized with a multiplicative constant of 2, which we found leads to faster convergence.

\begin{table}[H]
    \centering
        \begin{tabular}{ccccc}
        \toprule
        Data & \textsc{N. flows} & \textsc{N. hidden layers} & \textsc{N. hidden units} \\
        \midrule
        FreyFaces & 4 & 4 & 256 \\
        Omniglot & 2 & 2 & 512  \\
        Caltech & 8 & 4 & 256  \\
        \bottomrule
    \end{tabular}
    \caption{Architectural details for the VAE experiment.}
    \label{tab:arch_vae}
\end{table}

\begin{table}[H]
    \centering
    \ra{1.1}
    \begin{tabular}{@{} lrrr @{}}
        \toprule
        Model & \textbf{\textsc{Freyfaces}} & \textbf{\textsc{Omniglot}} & \textbf{\textsc{Caltech}} \\
        \cmidrule{1-4}
        No flow~{\small\citep{kingma2013auto}} & $4.53\pm0.02$ & $104.28\pm0.39$ & $110.80\pm0.74$ \\
        Planar~{\small\citep{rezende2015variational}} & $4.40\pm0.06$ & $102.65\pm0.42$ & $109.66\pm0.42$ \\
        IAF~{\small\citep{kingma2016improved}} & $4.47\pm0.05$ & $102.41\pm0.04$ & $111.58\pm0.38$ \\
        Sylvester~{\small\citep{van2018sylvester}} & $4.45\pm0.04$ & $99.00\pm0.04$ & $104.62\pm0.29$ \\
        \cmidrule{1-4}
        CP-Flow (Ours) & $4.47\pm0.02$ & $102.06\pm0.03$ & $106.53\pm0.55$ \\
        CP-Flow aug (Ours) & $4.45\pm0.03$ & $100.82\pm0.30$ & $105.17\pm0.57$ \\
        \bottomrule
    \end{tabular}
    \caption{\small Negative ELBO of VAE (lower is better).
    For \textsc{Freyfaces} the results are in bits per dim. 
    The numbers are averaged over three runs of experiments. Standard deviation is presented in the \cref{app:vae}.
    For CP-Flow with input-augmented ICNN trained on \textsc{Omniglot}, only one seed converged at the time of submission, so we report N/A for the standard deviation. 
    }
    \label{tab:vae_results_full}
\end{table}

\newpage
\section{Additional Ablation}
\label{app:ablation}

\begin{wrapfigure}[16]{r}{0.5\textwidth}
\vspace{-1.5cm}
\begin{center}
    \includegraphics[width=0.5\textwidth]{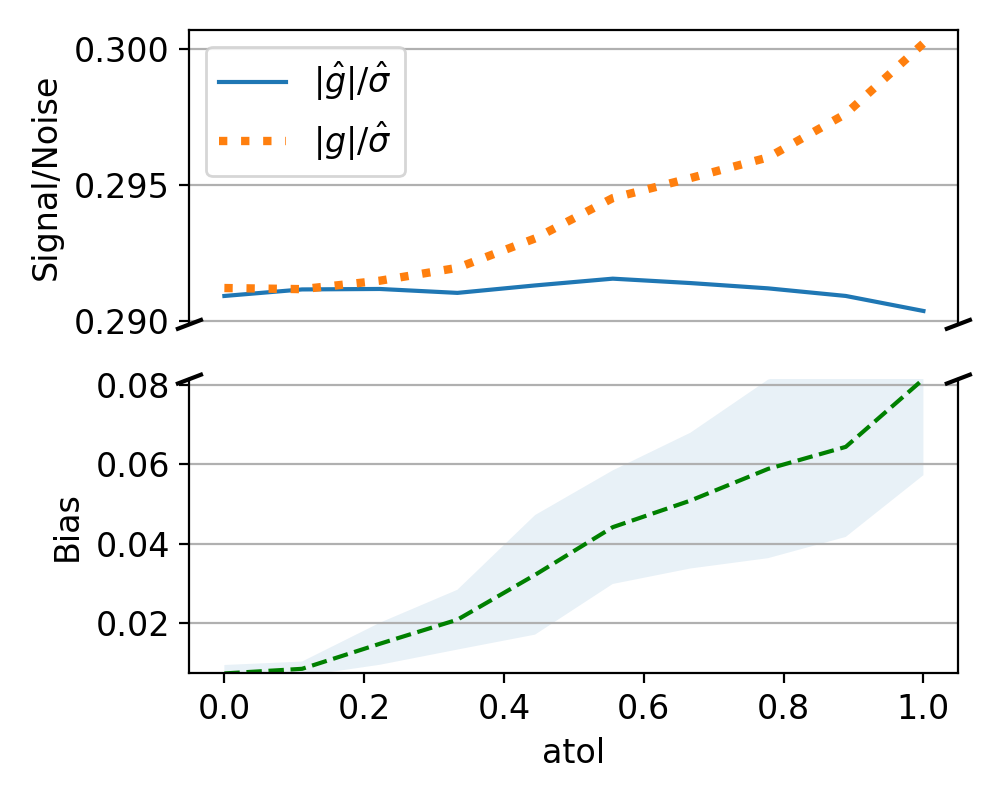}
    \caption{\small Simulated signal-to-noise ratio and bias of the gradient estimator for different atol values}
    \label{fig:bias_s2n}
\end{center}
\end{wrapfigure}
We perform an analysis on the effect of changing the absolute error tolerance level (atol) on the gradient estimator's bias and variance.
In a synthetic setting, we sample a $d\times d$ (where $d=10$) positive definite matrix $H$ from $\gW(I, 2d)$ and scale it down by $1/d$ (so that the diagonal entries do not grow as $d$ increases).
We linearly interpolate atol values between $0$ and $1$, and compare the error statistics of the gradient estimator against the ground truth gradient of $\log\det H$ w.r.t $H$. 
The error statistics are (1) the signal-to-noise ratio (expected value of the estimator, or the ground truth gradient, divided by the standard deviation), and (2) the absolute bias (expected value of the absolute difference between the estimate and the ground truth). 
Since these quantities are intractable, we use Monte Carlo estimate of 100K samples (draw 100K i.i.d. samples of the gradient estimator) to smooth out the curves. 
The estimates are averaged across all entries of $H$ and $10$ different random $H$'s.
Results are presented in \cref{fig:bias_s2n}.

On a more realistic setting, we monitor the average number of CG iterates (hvp calls), per-iteration time, as well as validation loss on the Miniboone dataset. 
Meanwhile, we compare the vanilla ICNN, input-augmented ICNN, as well as a dense version of ICNN \citep{huang2017densely}. 
Figure \ref{fig:ablate_atol_arch} shows that as tolerance value becomes smaller, we indeed need more hvp calls, which eventually saturates at the dimensionality of the data. 
However, there is not much difference in terms of log-likelihood if the tolerance is sufficiently small: the validation loss oscillates and diverges for $\textnormal{atol}=0.1$, while the curves for different atol values smaller than $0.001$ are almost indistinguishable and fairly stable.
On the other hand, there is a noticeable difference in performance if we simply replace the vanilla ICNN with the input-augmented ICNN or the dense ICNN. 
This suggests \emph{tuning the architecture of the convex potential is more crucial for improving the overall performance}. This set of ablation studies were all performed on P100 NVIDIA GPUs and with a constant batch size of 1024.

\paragraph{Runtime of directly backpropogating through Lanczos}
Furthermore, we include an analysis on the effect of the number of eigenvalues used in the stochastic Lanczos quadrature method (\ie{} size of the tridiagonal matrix) on training time. 
In this experiment we directly backpropagated through the Lanczos estimate to compute the stochastic gradient (instead of using the proposed gradient estimator). 
Figure~\ref{fig:ablate_m_lanczos} shows that the runtime is much higher than the runtime of the proposed gradient estimator using CG (bottom left of Figure~\ref{fig:ablate_atol_arch}). 
We note that the experiments with $m=5$ diverged, possibly due to the error in estimation. 
This complements the memory profile shown in Figure~\ref{fig:memory_profile}, and accentuates the lower runtime and memory requirement of the proposed method by contrast.

\begin{figure}
    \centering
    \includegraphics[width=0.49\textwidth]{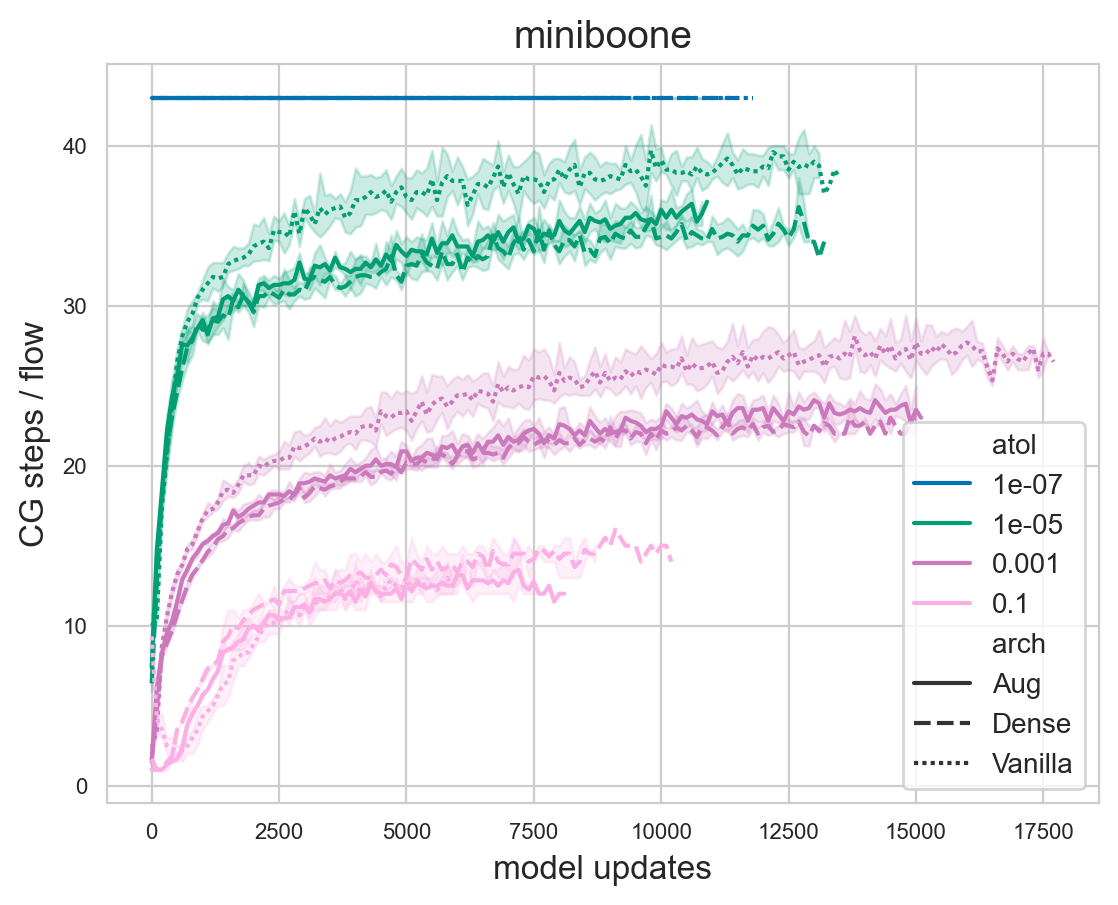}
    \includegraphics[width=0.49\textwidth]{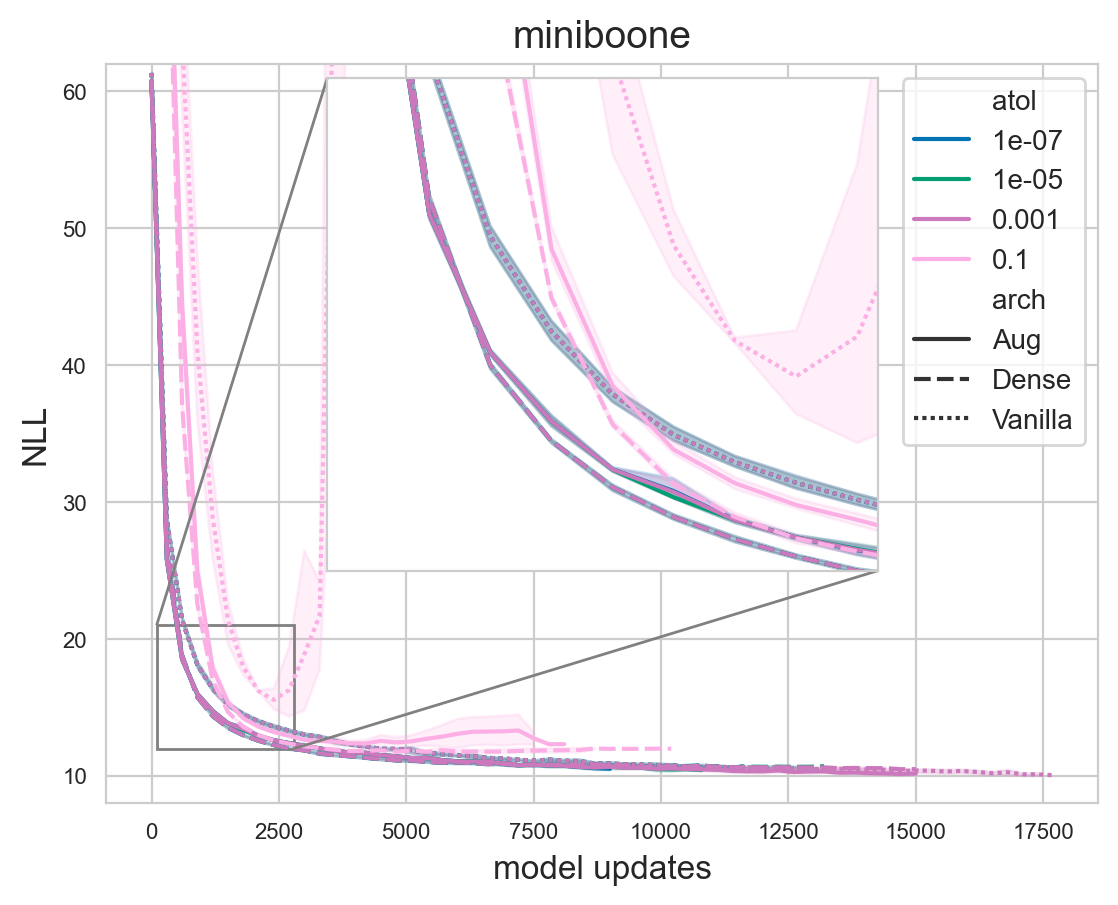}
    \includegraphics[width=0.49\textwidth]{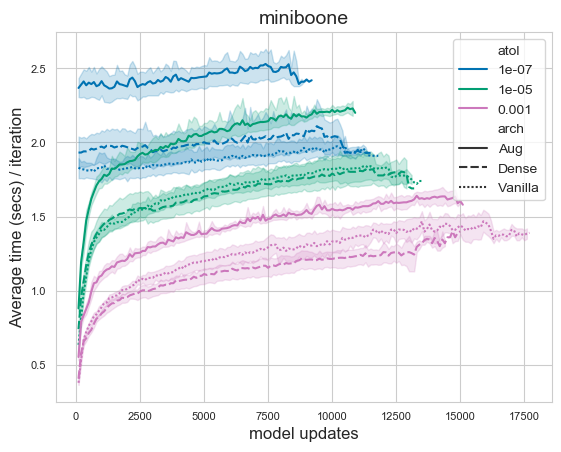}
    \includegraphics[width=0.49\textwidth]{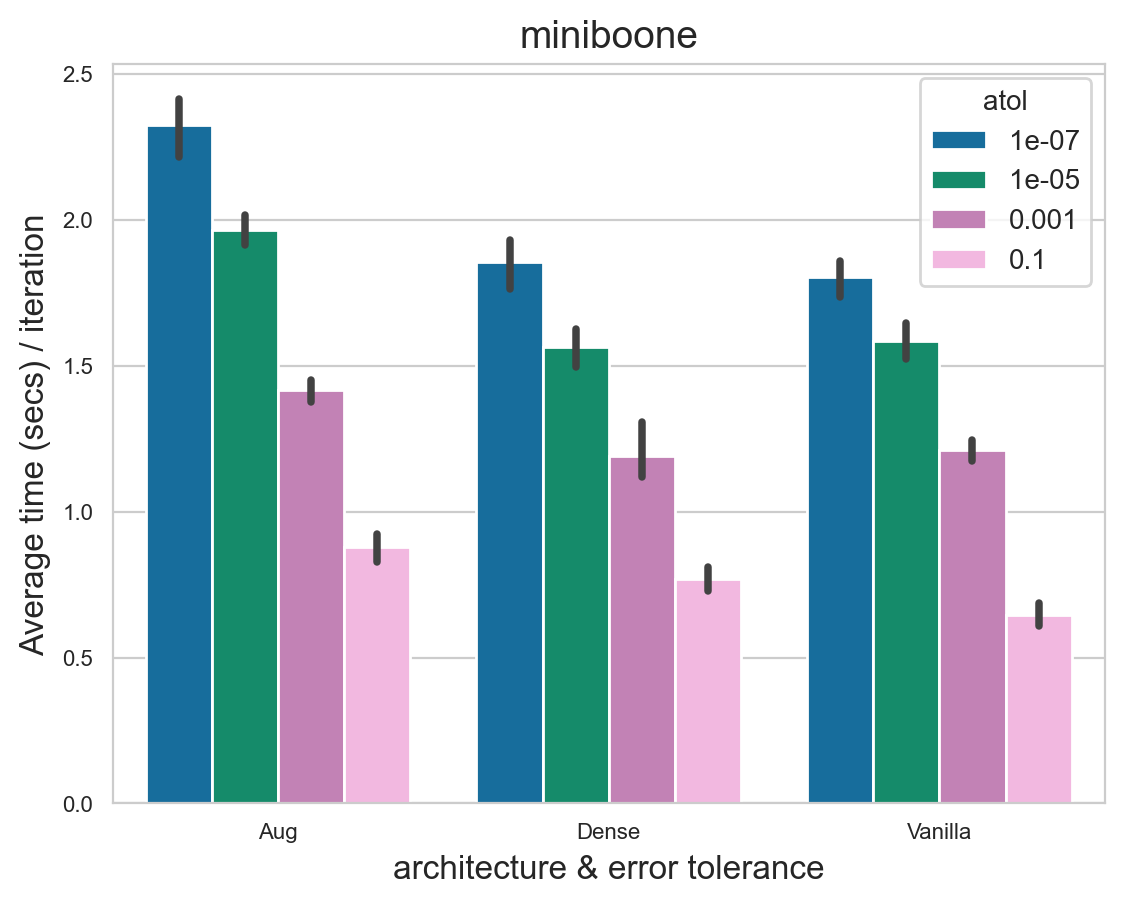}
    \caption{\small Ablation of different ICNN architectures and absolute error tolerance for conjugate gradient.
    Left: the average number of CG iterates (hvp calls) per flow layer (top row) and the corresponding average time (in seconds) per iteration (bottom row). 
    Top right: validation set negative log-likelihood (exact estimate).
    Notice that, for $\text{atol} = \num{1e-7}$, CG iterations cap at $43$ per flow layer; this is the dimensionality of the input data in the \textsc{Miniboone} dataset.
    Bottom right: per-iteration time (in seconds) averaged over all training steps.}
    \label{fig:ablate_atol_arch}
\end{figure}

\begin{figure}
    \centering
    \includegraphics[width=0.49\textwidth]{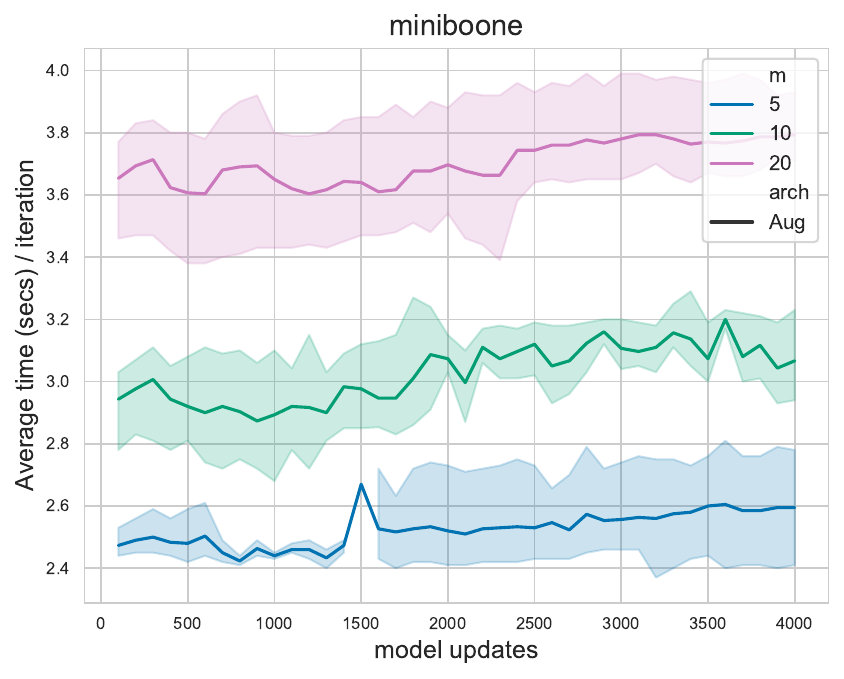}
    \caption{\small Ablation of the effect of number of eigenvalues used in SLQ on average time (in seconds) per iteration.}
    \label{fig:ablate_m_lanczos}
\end{figure}

\end{document}